  \providecommand\BibTeX{{%
    Bib\TeX}}}
\def\BibTeX{{\rm B\kern-.05em{\sc i\kern-.025em b}\kern-.08em
    T\kern-.1667em\lower.7ex\hbox{E}\kern-.125emX}}
\newtheorem{theorem}{Theorem}
\def\BibTeX{{\rm B\kern-.05em{\sc i\kern-.025em b}\kern-.08em
    T\kern-.1667em\lower.7ex\hbox{E}\kern-.125emX}}
\begin{document}


\title{Decentralized Rank Scheduling for Energy-Constrained Multi-Task Federated Fine-Tuning in Edge-Assisted IoV Networks}

\author{
	Bokeng~Zheng,
    Jianqiang~Zhong,
    Jiayi~Liu,
    Lei~Xue,~\IEEEmembership{Member,~IEEE}\\
    Xu~Chen,~\IEEEmembership{Senior Member,~IEEE,}
        Xiaoxi~Zhang,~\IEEEmembership{Member,~IEEE}
		
\thanks{B. Zheng, J. Zhong, J. Liu, X. Chen and X. Zhang are with the School of Computer Science and Engineering, Sun Yat-sen University, Guangzhou 510006, China~(E-mail: \{zhengbk6, zhongjq28, liujy373\}@mail2.sysu.edu.cn; \{chenxu35, zhangxx89\}@mail.sysu.edu.cn).}
\thanks{Lei Xue is with the School of Cyber Science and Technology, Sun Yat-Sen University, Shenzhen, China~(E-mail: qqxuelei@gmail.com).
}
\thanks{Xiaoxi Zhang is the corresponding author.}
}

\IEEEpubidadjcol
    \IEEEoverridecommandlockouts

\maketitle

\begin{abstract}
Large-scale Internet of Vehicles (IoV) deployments increasingly demand the on-device adaptation of foundation models to support diverse, mission-critical perception tasks. While federated fine-tuning offers a promising solution for efficient model specialization, existing approaches often struggle to reconcile the inherent conflict between stringent global energy budgets, heterogeneous task demands, and the high volatility of vehicular network connectivity. In this work, we introduce a hierarchical, adaptive framework that decouples multi-task fine-tuning into two interdependent optimization phases. First, we implement a feedback-loop mechanism at the infrastructure level that dynamically redistributes global energy budgets across concurrent tasks based on real-time convergence dynamics and resource utilization. Second, at the vehicle level, we formulate intra-task rank selection as an energy-constrained online learning problem, solved via a novel primal–dual bandit algorithm, UCB-DUAL, which provides theoretical guarantees on sublinear regret. Our approach effectively internalizes global energy constraints into local decision-making, allowing vehicles to autonomously navigate the complex trade-off between model accuracy, latency, and power consumption. Empirical evaluations using a large-scale IoV simulator, driven by real-world trajectory data, confirm that our proposed method significantly outperforms current federated fine-tuning baselines, offering a robust and scalable solution for resource-constrained vehicular intelligence.
\end{abstract}

\begin{IEEEkeywords}
    Internet of vehicles, multi-task federated fine-tuning, low-rank adaptation (LoRA), heterogeneous resource scheduling, dynamic rank allocation
\end{IEEEkeywords}


\section{Introduction}
\label{sec:intro}

With the rapid development of smart cities, the Internet of Vehicles (IoV) has emerged as a critical infrastructure for enabling intelligent edge services~\cite{wang2021green}, including traffic perception, environmental monitoring, and autonomous driving~\cite{javaid2018smart,rath2018smart,ding2021overview}. These services are typically deployed in resource-constrained edge environments and must satisfy stringent requirements on latency, reliability, and energy efficiency~\cite{ding2021overview}. Meanwhile, vehicles continuously generate large volumes of heterogeneous data that reflect highly dynamic traffic and environmental conditions. Effectively exploiting such local data to adapt learning models to evolving edge scenarios has become a central challenge in IoV systems~\cite{Tomtit2024}. In this context, federated fine-tuning has become a practical paradigm, enabling vehicles to locally adapt pre-trained foundation models via lightweight updates rather than transmitting raw data or retraining full models~\cite{bian2025survey}. While this approach preserves privacy and reduces communication overhead, standard full-parameter fine-tuning remains prohibitive in IoV due to its excessive computational cost and energy consumption on vehicular devices.

\begin{figure}
\centering
\includegraphics[width=1\linewidth]{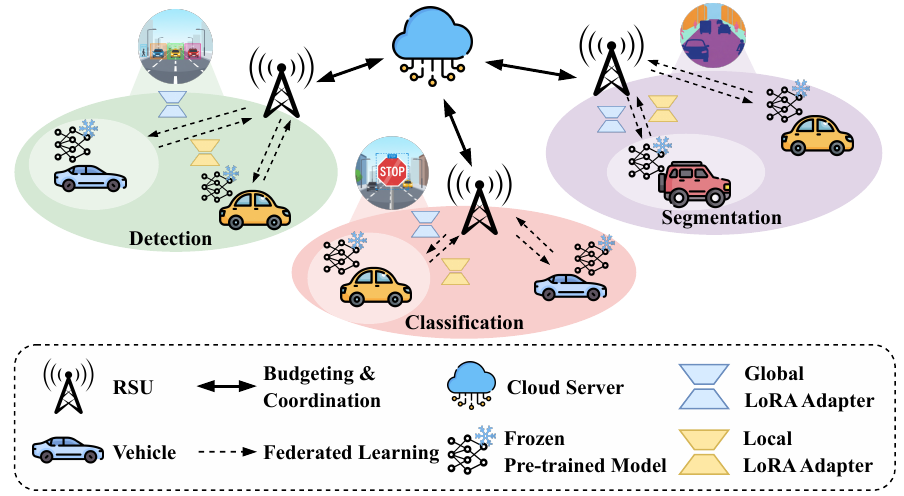}
\caption{Multi-task federated fine-tuning with IoV.}
\label{fig:intro}
\end{figure}

To overcome this limitation, parameter-efficient fine-tuning 
\\(PEFT) techniques~\cite{han2024parameter}, particularly Low-Rank Adaptation (LoRA)\\~\cite{hu2021lora}, provide an effective alternative to full fine-tuning. By injecting low-rank trainable modules into pre-trained models, LoRA achieves comparable performance with far fewer trainable parameters. This makes LoRA well suited for the edge-assisted IoV settings depicted in Fig.~\ref{fig:intro}, enabling lightweight on-vehicle updates under latency and energy constraints, and efficient aggregation at RSUs.

Despite these advantages, the choice of LoRA rank plays a critical role in determining system performance.
As shown in Fig.~\ref{fig:rank_com}, increasing rank consistently improves accuracy and convergence speed; however, this incurs higher latency and energy costs, revealing a critical accuracy–latency–energy trade-off.
Compared with full-parameter fine-tuning, LoRA-based adaptation achieves comparable or even superior performance with significantly lower system overhead. \textit{The reason is that low-rank parameterization effectively confines model updates to task-relevant subspaces while preserving the general knowledge of the pre-trained backbone, thereby providing an inherent regularization effect that mitigates overfitting on downstream datasets.} This design improves data efficiency and optimization stability under resource-constrained edge settings. Nevertheless, overly small ranks may overly restrict model capacity, leading to slower convergence and degraded task performance. These observations indicate that LoRA rank selection is not merely a hyperparameter choice, but a key system-level control variable that fundamentally impacts learning efficiency and resource utilization in federated fine-tuning, motivating the need for adaptive and energy-aware rank optimization mechanisms.

\begin{figure}[t]
    \centering
    \begin{subfigure}[b]{0.49\columnwidth}
        \centering
        \includegraphics[width=\linewidth]{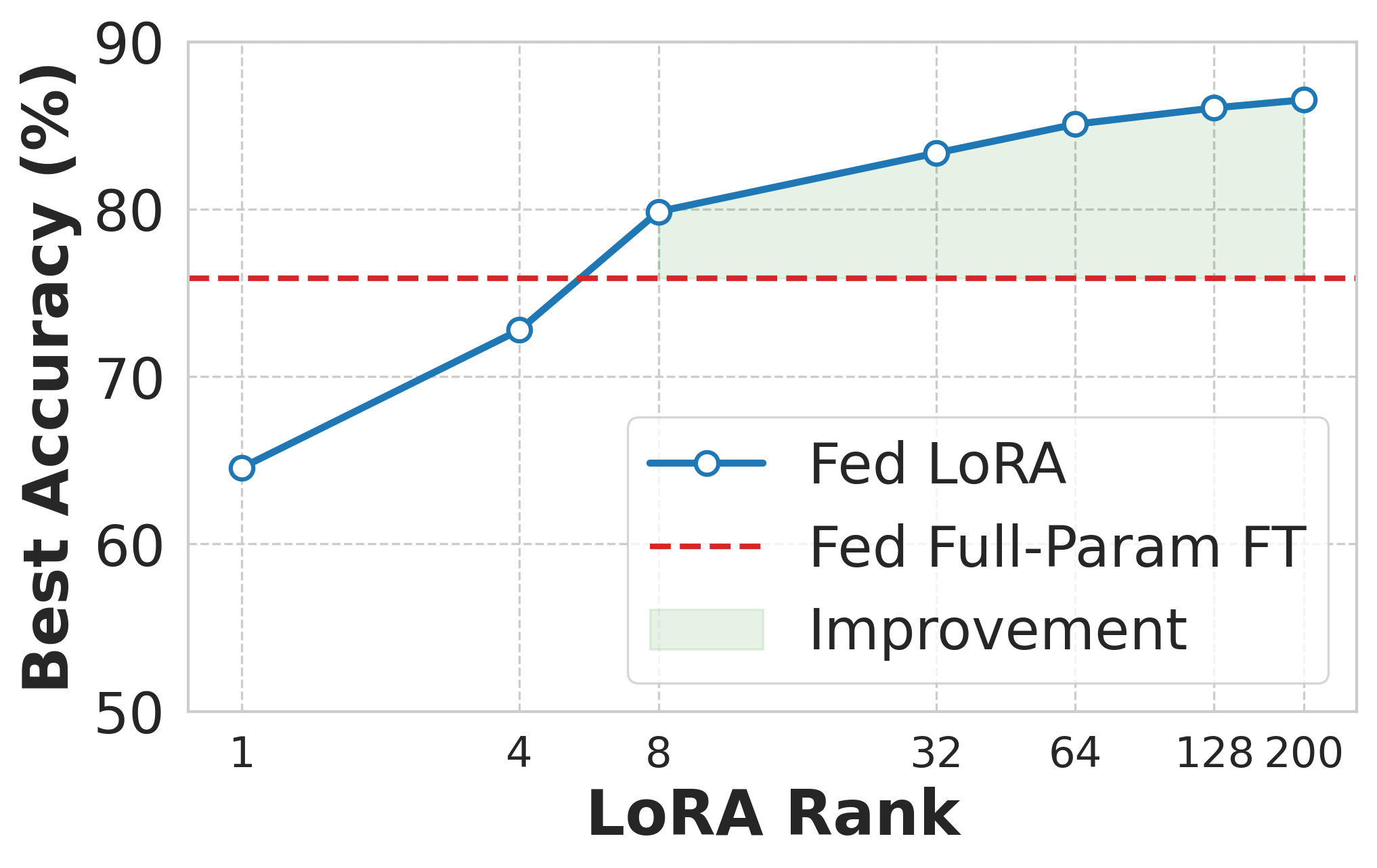}
        \caption{Impact of rank on accuracy.}
        \label{fig:rank_acc_small}
    \end{subfigure}
    \hfill
    \begin{subfigure}[b]{0.49\columnwidth}
        \centering
        \includegraphics[width=\linewidth]{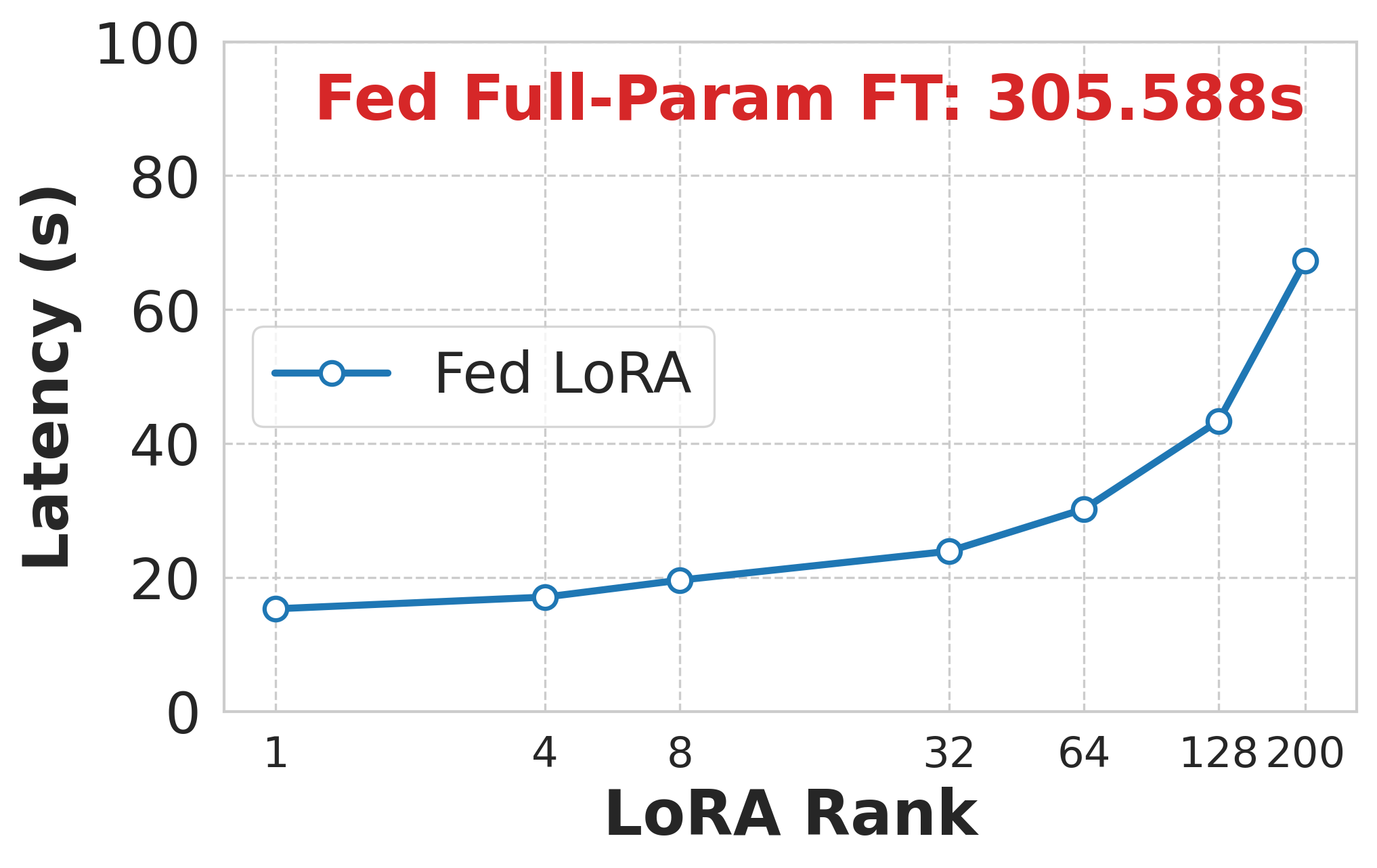}
        \caption{Impact of rank on latency.}
        \label{fig:rank_lat_small}
    \end{subfigure}

    \vspace{0.1cm}

    \begin{subfigure}[b]{0.49\columnwidth}
        \centering
        \includegraphics[width=\linewidth]{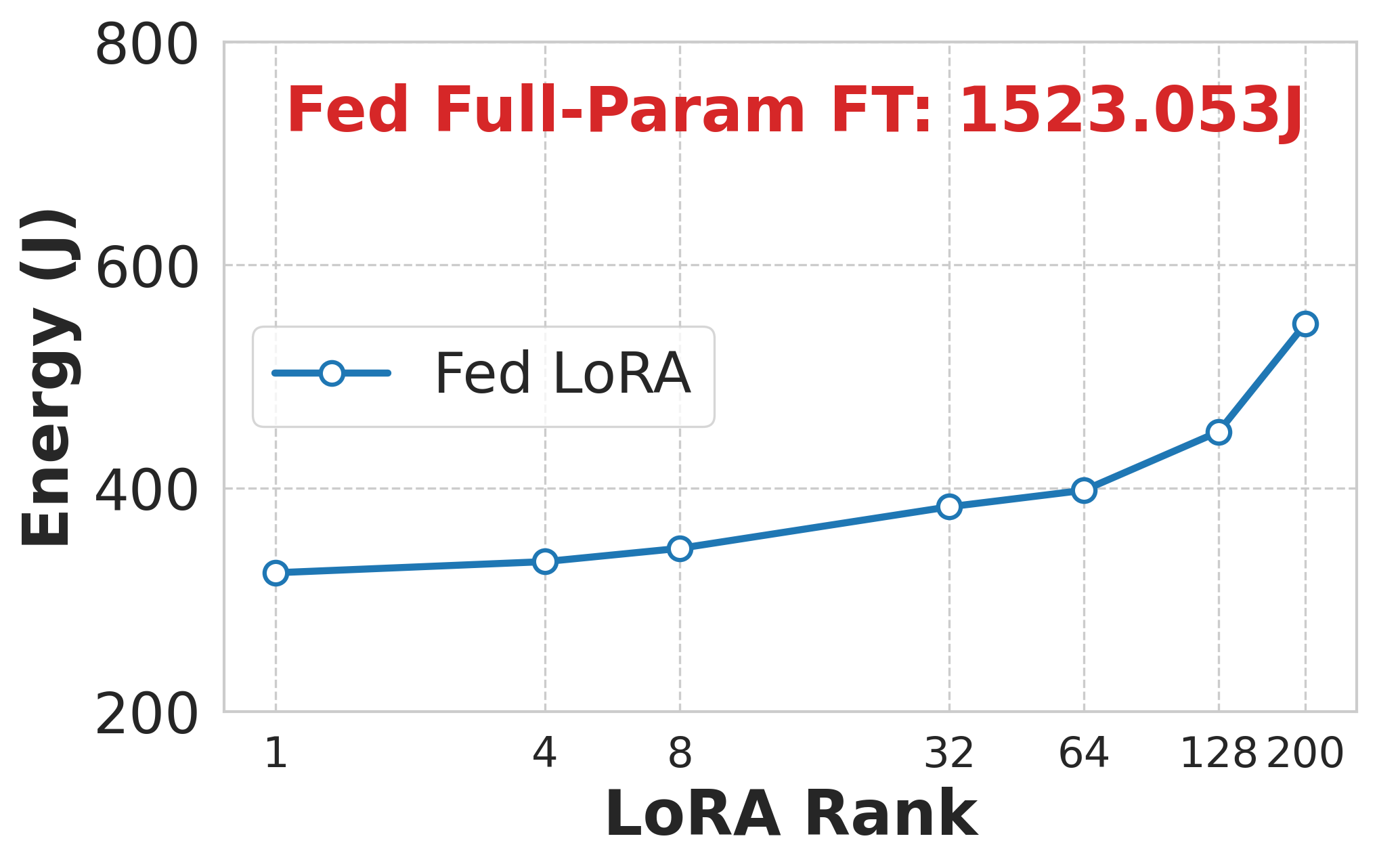}
        \caption{Impact of rank on energy.}
        \label{fig:rank_energy_small}
    \end{subfigure}
    \hfill
    \begin{subfigure}[b]{0.49\columnwidth}
        \centering
        \includegraphics[width=\linewidth]{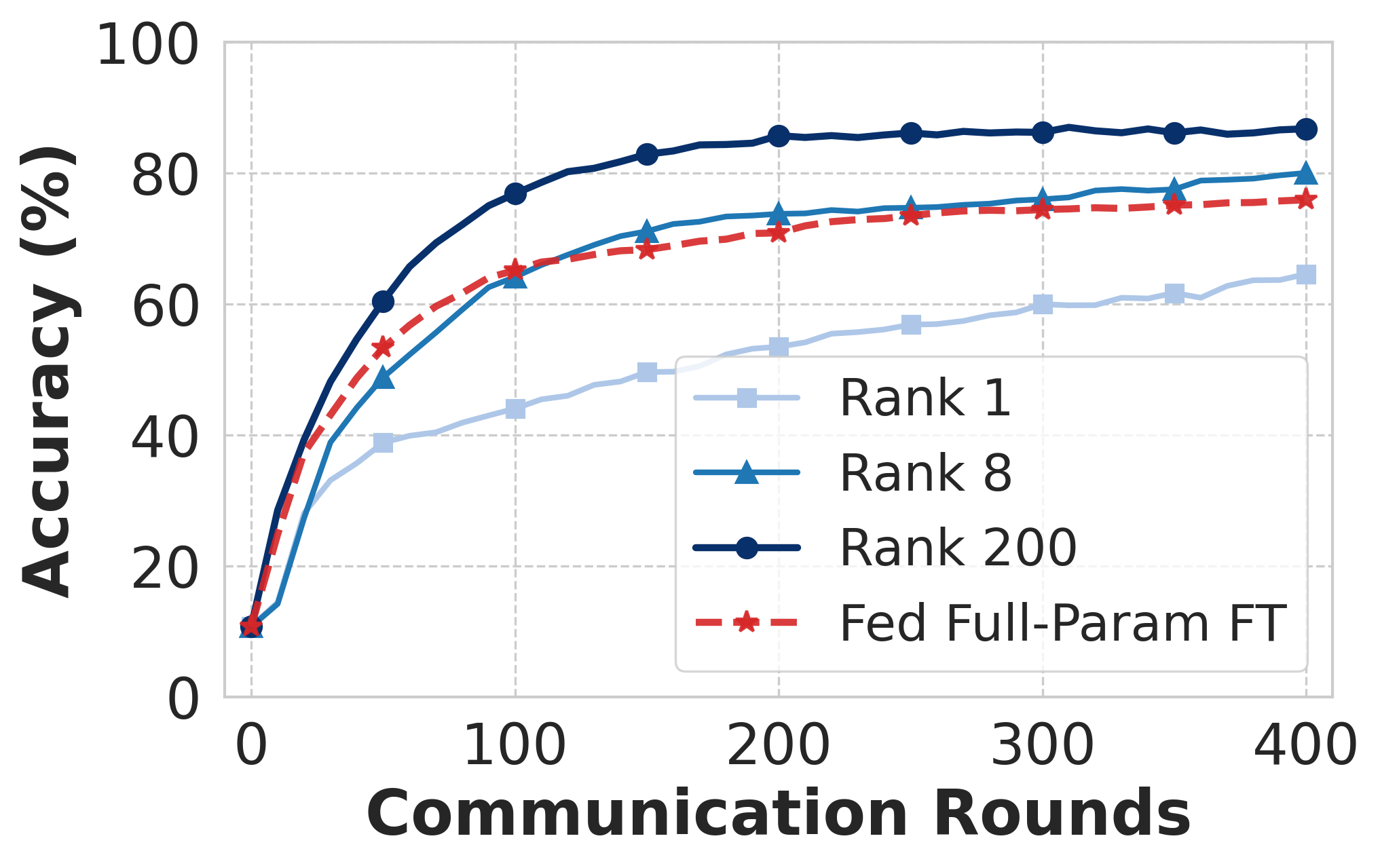}
        \caption{Impact on convergence.}
        \label{fig:rank_conv_small}
    \end{subfigure}
    
    \caption{Impact of LoRA rank on federated fine-tuning (FT) performance. (a) Accuracy, (b) latency, and (c) energy consumption as functions of the LoRA rank, where all clients adopt a uniform rank in the federated setting (denoted as Fed LoRA). Results are compared with federated full-parameter FT (Fed Full-Param FT). (d) Convergence behavior under different ranks, measured by accuracy versus communication rounds. Experiments are conducted using a ViT-based model on a traffic sign classification task.}
    \label{fig:rank_com}
\end{figure}


\noindent {\bf Motivating Scenario.} 
We consider a realistic multi-task federated fine-tuning scenario in IoV systems (Fig.~\ref{fig:intro}), where vehicles concurrently support multiple edge services across RSU coverage areas, a common setting in vehicular edge computing architectures~\cite{meneguette2022vec,kang2024iov_edge}. Multiple RSUs are deployed across an urban area and interconnected through a backbone network, forming a hierarchical learning architecture with cloud-assisted coordination. Each task corresponds to a specific edge service (e.g., urban object detection, road-scene segmentation, or context-aware traffic sign classification~\cite{xu2021blockchain}) and is associated with distinct accuracy, latency, and energy requirements.

\noindent {\bf Global Energy Budget.}
Crucially, on-vehicle learning is governed by fleet-level energy and cost constraints that impose a global energy budget across tasks. Such constraints are common in electric and autonomous vehicle deployments, where operators manage aggregate energy consumption to control charging costs, maintain vehicle availability, and meet service-level objectives across the fleet~\cite{sadeghian2022ev_charging}.
To enforce these limits, a central cloud coordinates task-level energy budgets based on RSU-reported task complexities and system conditions. Critically, this global budget induces a bidirectional coupling between inter-task allocation and intra-task rank adaptation: prioritizing one task inherently constrains the rank-tuning degrees-of-freedom of concurrent tasks, while local rank choices dictate aggregate energy utility. Consequently, task budgeting and model adaptation cannot be resolved in isolation without either violating global energy constraints or significantly sacrificing system-wide efficiency.

Despite its promise, enabling scalable and efficient multi-task FT in IoV faces several {\bf technical challenges}:
\begin{itemize}[leftmargin=*]
\item \textbf{Task and system heterogeneity.}
RSUs must coordinate diverse tasks with heterogeneous accuracy, latency, and energy requirements, across vehicles having disparate computational power and intermittent bandwidth~\cite{Li_Sahu_Talwalkar_Smith_2020,li2020federated}. Balancing multi-task performance under such time-varying conditions necessitates adaptive scheduling and lightweight adaptation strategies to prevent resource bottlenecks and ensure system-wide fairness.
\item \textbf{Distributed adaptation under global energy constraints.} Enforcing global energy budgets is challenging when vehicles make local decisions based on partial observations of data quality and energy status. Since frequent vehicle--RSU interaction is restricted by bandwidth, local adaptation must collectively respect global constraints with minimal signaling. This creates an inherent tension between local performance exploration and coordinated resource sustainability that conventional centralized methods fail to resolve.

\item \textbf{Vehicle mobility and dynamic participation.}
High vehicular mobility results in transient connectivity and frequent client dropouts, often leading to incomplete updates and wasted on-board resources~\cite{wang2022asynchronous}. Instead of assuming persistence, the system must explicitly tolerate premature departures and preserve partial progress. This requires fault-tolerant federated mechanisms capable of handling mobility-induced interruptions with minimal coordination overhead.
\end{itemize}

\smallskip
\noindent {\bf Our Contributions.} To address these challenges, we propose a hierarchical framework for multi-task federated fine-tuning in IoV systems, featuring vehicle-side distributed rank adaptation with lightweight RSU-level coordination. Our main contributions are summarized as follows:

\begin{itemize}[leftmargin=1.6em]
    \item We design a LoRA-based federated fine-tuning framework for real-time multi-task adaptation under mobility, heterogeneity, and energy constraints. Unlike prior edge fine-tuning methods that mainly consider task-wise adaptation or isolated resource control, our framework couples intra-task rank selection with inter-task coordination, enabling vehicles to choose personalized LoRA ranks from local conditions while RSUs aggregate only lightweight feedback without sharing raw data or full model parameters.
    
    \item We formulate LoRA rank adaptation as an energy-constrained online learning problem and develop UCB-DUAL, a primal-dual bandit algorithm for rank selection under dynamic energy budgets. The key design is an energy-aware confidence score that jointly accounts for expected gain, rank-dependent energy cost, and exploration uncertainty; the dual variable is updated at the RSU side using only aggregated scalar energy feedback, which reduces coordination overhead and supports provable sublinear regret and constraint satisfaction.

    \item We develop a large-scale IoV simulator based on real-world urban topologies and the T-Drive trajectory dataset~\cite{zheng2011t-drive}, enabling dynamic modeling of vehicular mobility, intermittent connectivity, and RSU coverage transitions. Our method achieves the best accuracy-efficiency tradeoff among compared baselines, while significantly reducing memory usage, demonstrating superior efficiency and robustness under dynamic conditions.
\end{itemize}

\section{Related Work}
\label{sec:related}

\noindent{\bf Parameter-Efficient Fine-Tuning (PEFT).} PEFT optimizes large-model adaptation under resource constraints by updating a minimal parameter subset~\cite{Houlsby2019Parameter, Zaken2021BitFit, Li2021Prefix}. Notably, LoRA~\cite{hu2021lora} leverages low-rank decomposition to drastically minimize memory and computational overhead.
While recent efforts apply LoRA to edge and personalized learning~\cite{Huang2024Combining}, they typically assume static computation contexts or single-task settings. In contrast, our method extends LoRA with dynamic task-aware rank scheduling, tailored for fluctuating resources and diverse task profiles of IoV systems.

\smallskip
\noindent{\bf Model Fine-Tuning in IoV.}
Recent studies have explored model fine-tuning in vehicular networks. GIOV~\cite{Xie2024GIOV} proposes an RSU-assisted FL framework for resource-constrained adaptation, while GAI-IOV~\cite{Xie2024GAI} leverages generative models for personalized services.
Energy- and latency-aware fine-tuning strategies~\cite{Otoum2022Transfer} improve adaptation efficiency, while Zheng \emph{etal.}~\cite{zheng2025online} addresses joint order-serving and spatio-temporal heterogeneous fine-tuning in vehicle crowdsensing via MARL and GNN-enhanced state modeling.
However, these efforts mostly focus on single-task pipelines or assume static resource partitions. Our approach uniquely targets dynamic multi-task fine-tuning with real-time task scheduling and client mobility support, delivering greater flexibility in IoV deployments with spatiotemporal variation.

\smallskip
\noindent{\bf Federated Fine-Tuning.}
Federated fine-tuning enables collaborative model adaptation while preserving data locality across distributed clients. FedAvg~\cite{mcmahan2017communication} establishes the basic aggregation framework, and subsequent works primarily address system and client heterogeneity.
While early research addressed system heterogeneity via regularization and personalization~\cite{li2020federated, arivazhagan2019federated}.
More recently, PEFT has been integrated into FL to accommodate resource-constrained clients. For instance, HetLoRA~\cite{cho2024heterogeneous} employs heterogeneous ranks via zero-padding, and FedRA~\cite{su2024fedra} utilizes randomized layer allocation. However, these methods often assume static environments and lack explicit modeling of task-level energy budgets or client mobility. Similarly, Tomtit~\cite{Tomtit2024} optimizes adapter synchronization via reinforcement learning but is limited to single-task scenarios.
In contrast, our work targets multi-task federated fine-tuning in dynamic IoV environments, explicitly accounting for client mobility, heterogeneous resources, and task-level energy budgets. We formulate LoRA rank adaptation as a constrained online learning problem and develop a lightweight primal--dual UCB-based mechanism for energy-aware rank selection with minimal coordination overhead.

\smallskip
\noindent{\bf Constrained Bandits and Online Resource Allocation.} While constrained online learning paradigms commonly integrate UCB with primal-dual updates~\cite{badanidiyuru2018bandits, qiu2020upper}, they typically assume stationary environments with independent atomic arms. LoRA rank selection, however, presents correlated reward-cost structures, rendering such atomic models sub-optimal. Furthermore, unlike multi-agent methods relying on independent local constraints~\cite{zhang2024rosevin}, we must manage a coupled global energy budget across mobile, heterogeneous clients. Our UCB-DUAL algorithm addresses this by coupling inter-task energy allocation with intra-task rank adaptation, enforcing global constraints via lightweight, scalar feedback specifically tailored to the dynamics of federated fine-tuning.
\section{Problem Formulation}
\label{sec: problem}
\subsection{System Overview}
\label{subsec:sys-overview}

We propose a multi-task federated fine-tuning system tailored for edge-assisted IoV, comprising $K$ RSUs and $V$ intelligent vehicles. Each task $t \in \mathcal{T}$ corresponds to an independent fine-tuning objective executed via distributed collaboration, where $\mathcal{V}_t$ denotes the set of vehicles participating in task $t$. To handle dynamic task demands and heterogeneous system constraints, we adopt a hierarchical optimization framework that jointly addresses resource scheduling and rank adaption.

In the \textbf{single-task setting}, an adaptive-rank LoRA-based fine-tuning framework allows each vehicle to select a suitable rank $\eta_v^t$ based on its device capacity and task demands, enhancing local efficiency. In the \textbf{multi-task setting}, cross-RSU coordination enables global resource sharing among concurrent tasks. During execution, each vehicle $v$ optimizes fine-tuning accuracy $q_v^t$ and latency $\tau_v^t$ under its energy constraint. This design captures the trade-off between performance and cost, supporting adaptive optimization in distributed, resource-constrained environments.

\subsection{Adaptive-Rank LoRA-Based Federated Fine-Tuning}

\begin{figure}
    \centering
    \includegraphics[width=1\linewidth]{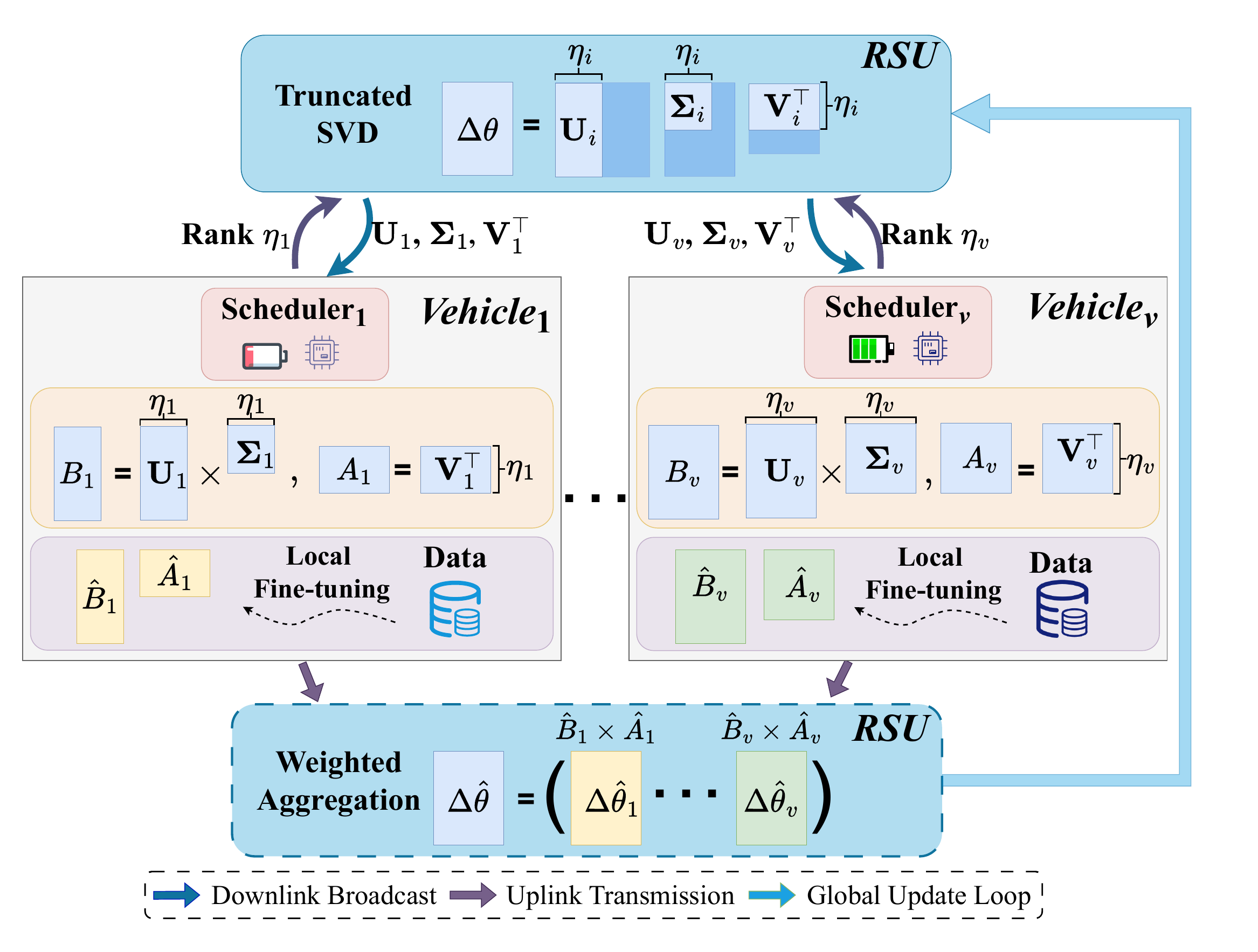}
    \caption{Overview of the adaptive-rank LoRA-based federated fine-tuning framework. The RSU performs truncated SVD on the global adapter $\Delta \theta$. Vehicles receive personalized components based on their selected ranks $\eta$, constructing local adapters $(B, A)$. After local fine-tuning, vehicles upload the updated matrices $(\hat{B}, \hat{A})$ to the RSU for weighted aggregation and reconstruction of the new global adapter $\Delta \hat{\theta}$.}
    \label{fig:AFlora}
\end{figure}
To enable efficient federated fine-tuning of foundation models in resource-constrained and heterogeneous IoV environments, we adopt LoRA as a parameter-efficient adaptation mechanism and extend it with adaptive rank allocation.
Specifically, the fine-tuned model parameters $\theta$ are represented as $\theta = \theta_0 + \Delta \theta$, where $\theta_0 \in \mathbb{R}^{d_1 \times d_2}$ denotes the frozen pre-trained backbone and $\Delta \theta$ is the trainable LoRA adapter.
Instead of updating $\theta_0$ directly, LoRA represents task-specific updates via a low-rank decomposition $\Delta \theta = BA$, where $B \in \mathbb{R}^{d_1 \times \eta}$ and $A \in \mathbb{R}^{\eta \times d_2}$ with $\eta \ll \min(d_1,d_2)$.

A key limitation of standard LoRA-based federated fine-tuning is the use of a fixed rank across all clients, which ignores heterogeneity in task demands and device capabilities. To address this, we propose an adaptive-rank LoRA-based federated fine-tuning framework, illustrated in Fig.~\ref{fig:AFlora}. The RSU first computes the global LoRA adapter parameters $\Delta \theta$, rather than the full pre-trained model weights, and applies Singular Value Decomposition (SVD), obtaining $\Delta \theta = \mathbf{U} \mathbf{\Sigma} \mathbf{V}^\top$. Here, $\mathbf{U}$ and $\mathbf{V}$ are orthogonal matrices representing the left and right singular vectors, respectively, while $\mathbf{\Sigma}$ is a diagonal matrix containing the singular values in descending order. These components are computed from the aggregated $\Delta \theta$ to capture the principal directions and relative intensities of the model update, providing the basis for subsequent rank truncation.
Each vehicle $v$ then selects a personalized rank $\eta_v$ based on its resource budget and task characteristics, using strategies designed in this work, reconstructing its local low-rank adapter using the truncated SVD components:
\begin{align*}
\Delta \theta_v = B_v A_v, \quad \text{where} \quad 
\begin{cases} 
B_v = \mathbf{U}[:, :\eta_v] \mathbf{\Sigma}[: \eta_v, :\eta_v], \\
A_v = \mathbf{V}[:, :\eta_v]^\top.
\end{cases}
\end{align*}

This formulation allows each vehicle $v$ to adaptively scale model complexity by tuning the update granularity to its own capability. After local fine-tuning with the selected adapter, vehicles upload their updated matrices $(\hat{B}_v, \hat{A}_v)$ to the RSU. The RSU performs weighted aggregation based on local data volumes to update the global model as:
\begin{align*}
    \Delta \hat{\theta} = \sum_{v \in \mathcal{V}_t} \frac{|D_v|}{|D|} \hat{B}_v \hat{A}_v, \quad \text{where} \quad |D| = \sum_{v \in \mathcal{V}_t} |D_v|.
\end{align*}

Here, $D_v$ denotes the local dataset of vehicle $v$, and $|D_v|$ represents its size.
This weighted scheme ensures that each vehicle's contribution reflects the significance of its local data, preserving fairness and robustness in global adaptation.

\noindent\textbf{Feasibility of SVD Truncation:} The diagonal entries of $\Sigma$ are ordered by magnitude, with larger singular values capturing dominant structures in the model update. Truncating at rank $\eta_v$ retains high-energy components while discarding redundant or noise-prone directions, yielding an effective trade-off between performance and resource efficiency, which is particularly critical for latency-sensitive and energy-bounded IoV settings.

\noindent\textbf{Computational Overhead Analysis.}
Let $\Delta \hat{\theta} \in \mathbb{R}^{d_1 \times d_2}$ be the updated LoRA matrix.
While full SVD requires $\mathcal{O}(\min(d_1 d_2^2, d_1^2 d_2))$, we employ truncated SVD to extract the leading $\eta_{\max}$ singular components, where $\eta_{\max}$ denotes the maximum rank in the candidate set. This reduces the computational complexity to $\mathcal{O}(d_1 d_2 \eta_{\max})$, an operation executed once per global round at the RSU and amortized across all vehicles, incurring negligible latency relative to local training.
Crucially, this centralized overhead is independent of the vehicle count. By distributing only the essential truncated factors, our design facilitates efficient, heterogeneous adaptation while minimizing redundant computation and transmission overhead.

This framework provides a unified and practical foundation for implementing heterogeneous and dynamic rank control, which is essential for the hierarchical adaptive rank selection algorithm introduced in section~\ref{sec:algorithm}.

\subsection{Modeling the Single-Task Optimization Problem}
\label{sec:single}

In the single-task case, we aim to adaptively determine the optimal LoRA rank $\eta_v^t$ for each vehicle $v$ participating in a specific task $t$. Our goal is to minimize a fleet-level cost-accuracy objective for federated LoRA, under per-vehicle energy and wall-clock latency constraints.
This optimization problem characterizes heterogeneity across vehicles in terms of computation capacity, communication bandwidth, and local data availability. Such heterogeneity necessitates a task-aware and resource-constrained optimization strategy.
Formally, the optimization objective is defined as:
\begin{align*}
\min_{\eta_v^t, \Delta \theta^t} \left[ \alpha \left( \max_{v \in \mathcal{V}_t} \tau_v^t \right) - \gamma \left( \frac{1}{V_t} \sum_{v \in \mathcal{V}_t} q_v^t \right) \right], \quad  \forall t \in \mathcal{T},
\end{align*}
$$
\text{s.t.} \quad 
\begin{cases} 
E_v^t \leq E_{\max}, &  \text{(C1: Energy constraint)}, \\
\tau_v^t \leq \tau_{\text{SLA}}^t, & \text{(C2: Latency constraint)}, \\
\Delta \hat{\theta}^t = \sum_{v \in \mathcal{V}_t} \frac{|D_v|}{|D|}\hat{B}_v^t \hat{A}_v^t, & \text{(C3: Aggregation)}.
\end{cases}
$$

The objective captures the key tradeoff: the maximum latency $\max \tau_v^t$ determines synchronization delay, while the average accuracy reflects overall model quality. Scalars $\alpha$ and $\gamma$ control their relative importance.
Constraints (C1)--(C3) strictly enforce the per-vehicle energy limits, stringent task latency requirements, and standard federated aggregation protocols.

To thoroughly analyze the latency and energy consumption characteristics of the system, we decompose each communication round of federated fine-tuning into four distinct stages:

\textbf{(1) Model Distribution.}
The RSU transmits personalized truncated SVD components $\{\mathbf{U}_v^t, \mathbf{\Sigma}_v^t, (\mathbf{V}_v^t)^\top\}$ to vehicle $v$ according to its selected rank $\eta_v^t$. This reduces the downlink payload to $\Omega_v^{(d)} \approx \eta_v^t(d_1 + d_2)$, achieving a compression ratio of $\frac{d_1 d_2}{\eta_v^t(d_1+d_2)}$ compared to full-parameter broadcasting.
The transmission latency and energy consumption are:
\begin{align*}
\tau_{v,k}^{(d)} = \frac{\Omega_v^{(d)}}{R_{v,k}^{(d)}}, \quad E_{v,k}^{(d)} = p_{v,k} \cdot \tau_{v,k}^{(d)},
\end{align*}
where $p_{v,k}$ is the RSU transmit power, and the downlink rate $R_{v,k}^{(d)} = W \log_2(1 + \text{SINR}_{v,k})$ is determined by the Shannon capacity model~\cite{Tse_Viswanath_2005}. Here, $W$ represents the system bandwidth, and $\text{SINR}_{v,k}$ is the signal-to-interference-plus-noise ratio, which accounts for distance-dependent path loss and small-scale Rayleigh fading.

\textbf{(2) Local Fine-tuning.}
Vehicle $v$ fine-tunes the global model using its local dataset (size $D_v^t$). The computation latency and energy are formulated as:
\begin{align*}
    \tau_v^{\text{comp}} = \frac{C_v \cdot D_v^t \cdot g(\eta_v^t)}{f_v}, \quad E_v^{\text{comp}} = \kappa_v \cdot (f_v)^3 \cdot \tau_v^{\text{comp}},
\end{align*}
where $C_v$ denotes the computation per sample, $g(\eta_v^t)$ is a rank-dependent complexity function, $f_v$ is the device frequency, and $\kappa_v$ is the system energy coefficient.

\textbf{(3) Parameter Upload.}
Vehicles upload the updated factor matrices $\{\hat{B}_v^t, \hat{A}_v^t\}$ to the RSU. With an uplink payload $\Omega_v^{(u)} = \eta_v^t(d_1 + d_2)$, the latency and energy costs are:
$$
    \tau_{v,k}^{(u)} = \frac{\Omega_v^{(u)}}{R_{v,k}^{(u)}}, \quad
    E_{v,k}^{(u)} = p_v \cdot \tau_{v,k}^{(u)},
$$
where $p_v$ is the vehicle's transmit power, and $R_{v,k}^{(u)}$ is the uplink rate derived from the Shannon capacity model. 

\textbf{(4) Aggregation.}
The RSU aggregates updates from $V$ vehicles. The corresponding latency and energy costs are:
$$
    \tau_k^{\text{agg}} = \frac{C_{\text{agg}} \cdot V}{f_k}, \quad E_k^{\text{agg}} = \kappa_k \cdot (f_k)^3 \cdot \tau_k^{\text{agg}},
$$
where $C_{\text{agg}}$ is the computation density of aggregation, $f_k$ is the RSU's computing power, $V$ is the number of vehicles, and $\kappa_k$ is the RSU's energy coefficient.

This stage-wise decomposition quantifies the fine-grained costs of federated fine-tuning, providing a rigorous analytical foundation to balance latency, energy, and accuracy trade-offs.

\subsection{Modeling the Multi-Task Optimization Problem}

We then consider a multi-task setting motivated by practical fleet operations, where vehicles have independent batteries but on-vehicle learning is governed by fleet-level energy and cost constraints that impose a global energy budget across tasks. Such fleet-level constraints are common in electric and autonomous vehicle deployments~\cite{sadeghian2022ev_charging}, where operators manage aggregate energy consumption to control charging costs, maintain vehicle availability, and meet operational service targets across the fleet. Accordingly, energy must be jointly allocated across tasks rather than optimized in isolation.

Building upon the four-stage models in Section~\ref{sec:single}, we define the wall-clock latency $\tau_t$, total energy consumption $E_t$, and average fine-tuning accuracy $q_t$ for each task $t \in \mathcal{T}$:
\begin{align}
\tau_t
&= \max_{v \in \mathcal{V}_t} \tau_{v,k}^{(d)}
 + \max_{v \in \mathcal{V}_t} \tau_v^{\text{comp}}
 + \max_{v \in \mathcal{V}_t} \tau_{v,k}^{(u)}
 + \tau_k^{\text{agg}},
\\
E_t
&= \sum_{v \in \mathcal{V}_t} E_{v,k}^{(d)}
 + \sum_{v \in \mathcal{V}_t} E_v^{\text{comp}}
 + \sum_{v \in \mathcal{V}_t} E_{v,k}^{(u)}
 + E_k^{\text{agg}},
\label{eq:task_energy}
\\
q_t
&= \frac{1}{V_t} \sum_{v \in \mathcal{V}_t} q_v^t,
\label{eq:task_acc}
\end{align}
where $q_v^t$ is the fine-tuning accuracy of vehicle $v$ on task $t$, and $\mathcal{V}_t$ (with fleet size $V_t = |\mathcal{V}_t|$) denotes the set of participating vehicles. 


To prevent any single task from monopolizing resources, we aim to maximize the system-wide utility—a weighted sum of accuracy and latency—subject to a global energy budget $E_{\text{total}}$. The multi-task optimization problem is formulated as follows:
\begin{align}
\max_{\eta_v^t, \Delta \theta^t} \sum_{t \in \mathcal{T}} \left[\gamma  q_t - \alpha \tau_t\right],
\label{eq:multi_task_opt}
\end{align}
\begin{align*}
\text{s.t.} \quad
\begin{cases}
\sum_{t \in \mathcal{T}}E_t \leq E_{\text{total}}, \\
\text{C1: Hardware constraint}, \\
\text{C2: Latency constraint}, \\
\text{C3: Federated aggregation}.
\end{cases}
\end{align*}


Here, $\gamma$ and $\alpha$ are user-defined weights balancing model accuracy and delay, respectively. This formulation ensures efficient inter-task coordination by dynamically reallocating resources based on task priorities while strictly complying with aggregate energy limitations and per-task operational requirements.

\section{UCB-DUAL-Based Adaptive Rank Allocation}
\label{sec:algorithm}

\subsection{Overview of Hierarchical Optimization}
In Section~\ref{sec: problem}, we introduced the architecture for multi-task adaptive LoRA-based federated fine-tuning, highlighting the benefits of rank flexibility in balancing performance and efficiency. However, it is difficult to select appropriate LoRA ranks per vehicle, as rank decisions directly impact accuracy, latency, and energy consumption. To address this, we propose a hierarchical adaptive rank allocation framework that jointly optimizes task-level and vehicle-level decisions to maximize the multi-objective function in Equation~\ref{eq:multi_task_opt}. 
Specifically:
\begin{itemize}
    \item \textbf{Inter-Task (RSU-Side):} A global energy budget is dynamically distributed across tasks based on feedback signals such as task difficulty and energy utilization.
    \item \textbf{Intra-Task (Vehicle-Side):} Given the task-specific budget, a distributed UCB-DUAL algorithm is employed to assign ranks to individual vehicles, balancing exploration and exploitation under per-task constraints.
\end{itemize}

\subsection{Inter-Task: Budget Allocation across Tasks}
In the multi-task federated fine-tuning framework, we aim to allocate a total energy budget $E_{\text{total}}$ across $T$ heterogeneous tasks. Task-specific energy demands vary due to differences in complexity and resource efficiency.
To this end, we introduce a two-level feedback-driven energy allocation mechanism that periodically redistributes energy based on task difficulty and energy utilization.

To quantify the difficulty of task~$t$ at round~$m$, we define a smoothed task difficulty coefficient $h_t^{m} \in (0, 1]$ using an exponential moving average (EMA) approach:
\begin{align}
h_t^{m} = \xi h_t^{m-1} + (1 - \xi) \left( \frac{\overline{E}_t^{m}}{q_t^{m}} \right),
\label{eq:difficult}
\end{align}
where $q_t^m$ is the average fine-tuning accuracy of task $t$ across all participating vehicles in round $m$ (see Eq.~\ref{eq:task_acc}), $\overline{E}_t^m$ denotes the energy budget allocated to task $t$ in the same round, and $\xi \in [0, 1]$ is a temporal smoothing factor.
This ratio captures the energy cost per unit accuracy, reflecting task difficulty over time.

We define energy utilization efficiency $\mu_t^{(m)} \in [0, 1]$ as:
\begin{align}
\mu_t^{m} = \frac{E_t^{m}}{\overline{E}_t^{m}},
\label{eq:utility}
\end{align}
where $E_t^m$ is the actual energy consumed during the fine-tuning of task $t$ in round $m$ (as derived in Eq.~\ref{eq:task_energy}), and $\overline{E}_t^m$ is the previously defined energy budget.
This efficiency metric identifies tasks that are over-provisioned or under-utilized, providing a feedback signal for the subsequent budget reallocation process.

We combine difficulty and utilization into a priority weight:
\begin{align}
w_t^{m} = \left(h_t^{m}\right)^\zeta \cdot \mu_t^{m},
\label{eq:weight}
\end{align}
where $\zeta > 1$ amplifies the priority of difficult tasks.

These feedback signals are integrated into Algorithm~\ref{alg:dynamic-energy}, which outlines the periodic energy reallocation process. The algorithm begins with an equal division of the total energy budget, incorporating minor rounding adjustments. Every $Q$ rounds, it updates task difficulty, utilization, and composite weight (lines~\ref{line:mod-check} -- \ref{line:com_weight}), then reallocates the remaining budget proportionally to $w_t^m$, with a cap of $0.7E_{\text{total}}$ per task(lines~\ref{line:task} -- \ref{line:energy}). In other rounds, energy assignments remain unchanged (lines~\ref{line:remain}).
The resulting budgets constrain intra-task rank selection, enabling coordinated energy-aware fine-tuning under a global constraint.

\begin{algorithm}[t]
\caption{Dynamic Task-Level Energy Allocation}
\label{alg:dynamic-energy}
\SetAlgoVlined
\SetKwInOut{Input}{Input}
\SetKwInOut{Output}{Output}
\SetKwInOut{Initialize}{Initialize}
\Input{Total energy budget $E_{\text{total}}$, number of tasks $T$, warm-up period $Q$}
\Output{Energy allocation sequence $\{\overline{E}_t^{m}\}_{m=1}^M$}
\Initialize{
$\overline{E}_t^{0} \leftarrow \left\lfloor \frac{E_{\text{total}}}{T} \right\rfloor + \mathbb{I}(t \leq E_{\text{total}} \bmod T),\quad \forall t \in \mathcal{T}$
}

\For{$m = 1$ \KwTo $M$}{
    \If{$m \bmod Q = 0$\label{line:mod-check}}{
        \ForEach{task $t \in \mathcal{T}$}{
            Update difficulty: $h_t^{m}$ via Eq.~\eqref{eq:difficult}\;
            Compute utilization: $u_t^{m}$ via Eq.~\eqref{eq:utility}\;
            Calculate weight: $w_t^{m}$ via Eq.~\eqref{eq:weight}\label{line:com_weight}\;
        }
        Compute remaining energy: $\overline{E}_{\text{rem}} = E_{\text{total}} - \sum_{t=1}^T \overline{E}_t^{m}$\;
        \ForEach{task $t \in \mathcal{T}$\label{line:task}}{
            $\Delta \overline{E}_t \leftarrow \text{round}\left( \frac{w_t^{m} \cdot \overline{E}_{\text{rem}}}{\sum_{j=1}^T w_j^{m}} \right)$\;
            Update energy allocation:
            $\overline{E}_t^{m+1} = \min\left( \overline{E}_t^{m} + \Delta \overline{E}_t,\; 0.7 \cdot E_{\text{total}} \right)$\label{line:energy}\;
        }
    }
    \Else{
        $\overline{E}_t^{m+1} \leftarrow \overline{E}_t^{m},~\forall t$\label{line:remain}\;
    }
}
\end{algorithm}

\subsection{Intra-Task: Rank Allocation via UCB-DUAL Algorithm}

At the intra-task level, we optimize rank allocation subject to energy constraints by modeling the process as a distributed constrained multi-armed bandit (MAB) problem. In each round $m$, vehicle $v \in \mathcal{V}_t$ selects a rank $\eta_v^m$ from a discrete set $\phi_\eta$ to maximize the cumulative reward:
\begin{align*}
R_v^m(\eta) = -\alpha \cdot \tau_v^m(\eta) + \gamma \cdot q_v^m(\eta),
\end{align*}
where $q_v^m$ and $\tau_v^m$ denote local fine-tuning accuracy and latency, respectively. By solving this online bandit problem, vehicles autonomously optimize their trade-off between accuracy and delay, ensuring local decisions align with the global objectives in \eqref{eq:multi_task_opt}.

\noindent{\bf Handling energy constraints via primal-dual theories.} Each configuration incurs an energy cost $E_v^m(\eta)$, subject to:
\begin{align*}
\sum_{v \in \mathcal{V}_t} E_v^m(\eta_v^m) \leq \overline{E}_t^m, \quad \forall m.
\end{align*}

To handle this constraint, we apply Lagrangian relaxation with dual variable $\lambda^m \geq 0$, yielding the decoupled objective:
\begin{align*}
\mathcal{L}^m = \sum_{v \in \mathcal{V}_t} \left[ R_v^m(\eta_v^m) - \lambda^m E_v^m(\eta_v^m) \right] + \lambda^m \overline{E}_t^m.
\end{align*}

Each vehicle then independently selects its rank via:
\begin{align*}
\eta_v^m = \arg\max_{\eta \in \phi_\eta} \left[ R_v^m(\eta) - \lambda^m E_v^m(\eta) + \text{UCB}_v^m(\eta) \right],
\end{align*}
where the exploration bonus is given by:
\begin{align*}
\text{UCB}_v^m(\eta) = \epsilon \cdot \sqrt{\frac{\log m}{1 + N_v^m(\eta)}},
\end{align*}
where $N_v^m(\eta)$ denotes the selection count of rank $\eta$ by vehicle $v$.
Dual variables update via projected subgradient ascent:
\begin{align*}
\lambda^{m+1} = \left[ \lambda^m + \omega \left( \sum_{v \in \mathcal{V}_t} E_v^m(\eta_v^m) - \overline{E}_t^m \right) \right]_+.
\end{align*}

These objectives are implemented in Algorithm~\ref{alg:dual-ucb}, which outlines the distributed rank allocation process via our proposed UCB-DUAL method. The algorithm initializes the dual variable and rank usage counters. In each round, vehicles independently estimate reward, energy cost, and UCB bonus for each candidate rank (lines~\ref{line:re_co} -- \ref{line:UCB_com}), and select the configuration that maximizes a utility function combining performance, penalized energy consumption, and exploration (line~\ref{line:select-rank}). Rank counts are updated accordingly. The RSU then updates the dual variable using subgradient ascent based on total energy usage (line~\ref{line:dual-update}), enforcing the global energy constraint. UCB-DUAL supports online, energy-aware rank adaptation across vehicles with lightweight RSU-level coordination, providing scalable and robust optimization for dynamic federated environments.

\begin{algorithm}[t]
\caption{UCB-DUAL-Based Rank Allocation}
\label{alg:dual-ucb}
\SetAlgoVlined
\SetKwInOut{Input}{Input}
\SetKwInOut{Output}{Output}
\SetKwInOut{Initialize}{Initialize}
\SetKwFunction{Select}{Select}
\Input{Action space $\phi_\eta$, learning rate $\omega$, exploration factor $\epsilon$, energy budget $E_{\text{total}}$}
\Output{Selected ranks $\{\eta_v^m\}$ for each $v \in \mathcal{V}_t$ and round $m$}
\Initialize{$\lambda^1 \leftarrow 0$, $N_v^0(\eta) \leftarrow 0,~\forall v\in\mathcal{V}_t,~\eta\in\phi_\eta$}

\For{$m=1$ \KwTo $M$}{
    \ForEach{vehicle $v \in \mathcal{V}_t$ \textbf{(in parallel)}\label{line:re_co}}{  
        \ForEach{$\eta \in \phi_\eta$}{ 
            Estimate reward $R_v^m(\eta)$ and cost $E_v^m(\eta)$\;
            Compute UCB bonus: $\text{UCB}_v^m(\eta) = \epsilon \sqrt{\frac{\ln m}{N_v^{m-1}(\eta) + 1}}$\label{line:UCB_com}\;    
        }
        Select configuration:
        $\eta_v^m = \arg\max_{\eta \in \phi_\eta} \left[ R_v^m(\eta) - \lambda^m E_v^m(\eta) + \text{UCB}_v^m(\eta) \right]$\label{line:select-rank}\;
        Update count: $N_v^m(\eta_v^m) = N_v^{m-1}(\eta_v^m) + 1$\;
    }
    Update dual variable:
    $\lambda^{m+1} = \left[ \lambda^m + \omega \left( \sum_{v \in \mathcal{V}_t} E_v^m(\eta_v^m) - \overline{E}_{t}^m \right) \right]_+$\label{line:dual-update}\;
}
\end{algorithm}

\subsection{Theoretical Guarantees}
We analyze the UCB-DUAL algorithm by characterizing its cumulative regret and energy constraint violation. Note that while vehicles are coupled through the dual variable $\lambda^m$, the system exhibits conditional separability given the sequence $\{\lambda^m\}_{m=1}^M$.

\begin{theorem}
\label{thm:ucb_dual}
With a learning rate $\omega = \Theta(1/\sqrt{M})$, the \emph{UCB-DUAL} algorithm achieves cumulative regret 
\begin{align*}
\text{Regret}_{\text{total}}(M) = \mathcal{O}(V |\phi_\eta| \sqrt{M \ln M}),
\end{align*}
and expected energy violation
\begin{align*}
\mathbb{E}[\mathbb{V}(M)] = \mathcal{O}(\sqrt{M}),
\end{align*}
where $\mathbb{V}(M) := \sum_{m=1}^M \left[\sum_v E_v^m(\eta_v^m) - \overline{E}_t^m\right]_+$ denotes total energy violation, and $|\phi_\eta|$ is the size of the candidate rank set.
\end{theorem}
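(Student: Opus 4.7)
My strategy is to treat UCB-DUAL as a decentralized primal-dual scheme on the per-round Lagrangian
\[
\mathcal{L}^m(\boldsymbol{\eta},\lambda)=\sum_{v\in\mathcal{V}_t}\!\bigl[R_v^m(\eta_v)-\lambda E_v^m(\eta_v)\bigr]+\lambda E_t^m,
\]
combining a standard UCB-style primal regret bound with a Lyapunov drift-plus-penalty argument on $\lambda^m$. Let $\boldsymbol{\eta}^\star$ be a feasible per-round benchmark with $\sum_v E_v^m(\eta_v^\star)\le E_t^m$, and set $g^m:=\sum_v E_v^m(\eta_v^m)-E_t^m$. Adding and subtracting $\lambda^m\sum_v E_v^m(\cdot)$ and using feasibility of $\boldsymbol{\eta}^\star$ gives
\[
\mathrm{Regret}_{\mathrm{total}}(M)\;\le\;\underbrace{\sum_{m=1}^M\!\bigl[\mathcal{L}^m(\boldsymbol{\eta}^\star,\lambda^m)-\mathcal{L}^m(\boldsymbol{\eta}^m,\lambda^m)\bigr]}_{\text{(A): primal regret}}\;-\;\underbrace{\sum_{m=1}^M\lambda^m g^m}_{\text{(B): dual coupling}},
\]
so the analysis cleanly splits into two essentially independent pieces.

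\textbf{Bounding (A) via UCB.} Because $\mathcal{L}^m$ is separable across agents, (A) decomposes into $V$ independent single-agent UCB problems over the $|\phi_\eta|$-arm set. Assuming $\lambda^m$ stays bounded (verified in the dual step below), the effective per-arm reward $R_v^m(\eta)-\lambda^m E_v^m(\eta)$ is uniformly bounded in $m$. The bonus $\epsilon\sqrt{\log m/(1+N_v^m(\eta))}$ is then calibrated to enforce optimism via a Hoeffding-Azuma concentration on the empirical means, with failure probability $\mathcal{O}(1/M)$. Under the optimism event, the standard pigeon-hole identity $\sum_{m:\eta_v^m=\eta}1/\sqrt{N_v^m(\eta)}=\mathcal{O}(\sqrt{M})$ delivers a per-agent regret of $\mathcal{O}(|\phi_\eta|\sqrt{M\log M})$, and summing over the $V$ agents bounds (A) by $\mathcal{O}(V|\phi_\eta|\sqrt{M\log M})$.

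\textbf{Bounding the violation and closing the loop.} For the dual, I would use the Lyapunov function $\Phi^m=\tfrac12(\lambda^m)^2$. Non-expansiveness of $[\,\cdot\,]_+$ together with $\lambda^{m+1}=[\lambda^m+\omega g^m]_+$ yields the one-step drift $\Phi^{m+1}-\Phi^m\le\omega\lambda^m g^m+\tfrac12\omega^2(g^m)^2$, and since the per-round energies are bounded, $|g^m|=\mathcal{O}(V)$. Telescoping gives $-\sum_m\lambda^m g^m\le\Phi^1/\omega+\tfrac12\omega V^2 M=\mathcal{O}(\sqrt{M})$ for $\omega=\Theta(1/\sqrt{M})$, which controls (B). A parallel drift inequality on $(\lambda^m-\lambda^\star)^2$ -- using that under a Slater condition the optimal multiplier $\lambda^\star$ is bounded -- then yields $\sum_m[g^m]_+=\mathcal{O}(\sqrt{M})$ and hence $\mathbb{E}[\mathcal{V}(M)]=\mathcal{O}(\sqrt{M})$. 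Substituting the bounds for (A) and (B) into the decomposition completes the regret estimate.

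\textbf{Main obstacle.} The trickiest step is controlling the decentralized UCB estimates under the \emph{shared, time-varying} dual $\lambda^m$: because $\lambda^m$ is measurable with respect to the joint history of all $V$ agents, the effective reward $R_v^m(\eta)-\lambda^m E_v^m(\eta)$ is non-stationary, so the textbook i.i.d.\ Hoeffding concentration must be replaced by a martingale concentration (e.g., Freedman's inequality) on an adaptive filtration that includes $\lambda^m$. A secondary subtlety is the piecewise-constant change of $E_t^m$ induced by Algorithm~\ref{alg:dynamic-energy} every $Q$ rounds, which introduces $\mathcal{O}(M/Q)$ change points; I would absorb these as lower-order additive terms that do not affect the $\sqrt{M\log M}$ scaling. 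Everything else reduces to textbook UCB and Lagrangian drift-plus-penalty calculations.
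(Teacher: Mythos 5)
Your plan is sound and lands on the same two pillars as the paper's proof -- a per-agent UCB/Hoeffding bound for the primal decisions and a Lyapunov drift argument on $\tfrac12(\lambda^m)^2$ for the dual -- but the regret decomposition you use is genuinely different from, and stronger than, the paper's. The paper defines regret directly on the regularized reward $\tilde{R}_v^m(\eta)=R_v^m(\eta)-\lambda^m E_v^m(\eta)$ against the best fixed arm in hindsight for that (algorithm-dependent) sequence, so its ``total regret'' is exactly your term (A) and the coupling term $-\sum_m\lambda^m g^m$ never appears; step (2) of the paper is then just a sum over agents. You instead measure regret on the true reward against a feasible benchmark and recover (A) plus the dual coupling (B), which you control by telescoping the drift inequality -- this is the standard primal-dual bandit decomposition and yields a guarantee for a more meaningful comparator. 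For the violation bound, the paper uses the projection property $\lambda^{M+1}\ge\omega\,\mathcal{V}(M)$ together with $L^{M+1}\le\omega\sum_m\lambda^m D^m+\tfrac12\omega^2 M V^2E_{\max}^2$ and the crude bound $\lambda^m\le VE_{\max}/\omega$, whereas you run a second drift inequality on $(\lambda^m-\lambda^\star)^2$ under a Slater condition; the two routes give the same $\mathcal{O}(\sqrt{M})$ rate, but yours makes explicit the assumption (Slater/bounded multiplier) that the paper leaves implicit. Your observation that the effective reward is non-stationary because $\lambda^m$ is adapted to the joint history, so Hoeffding must be replaced by a martingale concentration, is a real issue that the paper's sketch glosses over.

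One point to tighten before this becomes a full proof: the boundedness of $\lambda^m$ that you defer to the dual step must be a \emph{constant} bound, not merely $\mathcal{O}(1/\omega)$. The paper's own bound $\lambda^m\le VE_{\max}/\omega=\mathcal{O}(\sqrt{M})$ is too weak for your step (A) (and for the paper's step (1)): if the effective reward range scales with $\lambda^m$, the concentration constant in the UCB bonus scales with it too, and the $\sqrt{M\ln M}$ rate is lost. Under Slater you can show the drift of $(\lambda^m)^2$ is negative once $\lambda^m$ exceeds a constant threshold, giving the uniform $\mathcal{O}(1)$ bound you need; make that lemma explicit and state it before invoking the UCB analysis, since (A) and the dual step otherwise risk a circular dependency.
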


\begin{proof}[Proof Sketch]
We decompose the analysis into three parts.

\noindent\textbf{(1) Conditional Per-vehicle regret.}
Conditioned on a fixed dual sequence $\{\lambda^m\}$, the vehicle-level optimization problem decouples. We define the dual-regularized reward as $\tilde{R}_v^m(\eta) := R_v^m(\eta) - \lambda^m E_v^m(\eta)$. Given $\{\lambda^m\}$, each vehicle independently solves a non-stationary bandit problem. Let $\eta^*_v$ be the best fixed action in hindsight for vehicle $v$. The conditional regret is:
\begin{align*}
\text{Regret}_v(M) := \sum_{m=1}^M \tilde{R}_v^m(\eta^*) - \sum_{m=1}^M \tilde{R}_v^m(\eta_v^m).
\end{align*}

Applying the UCB policy with Hoeffding's inequality, we obtain:
\begin{align*}
\text{Regret}_v(M) \leq 4c\sqrt{|\phi_\eta| M \ln M} + \left(\frac{\pi^2}{3} + 1\right) |\phi_\eta| \Delta_{\max},
\end{align*}
where $\Delta_{\max} := \max_{\eta} [\tilde{R}_v^m(\eta^*) - \tilde{R}_v^m(\eta)]$.
Thus, the local regret grows sublinearly at $\mathcal{O}(\sqrt{M \ln M})$.

\noindent\textbf{(2) Global regret.} Summing over all vehicles yields the total regret:
\begin{align*}
\text{Regret}_{\text{total}}(M) := \sum_{v \in \mathcal{V}} \text{Regret}_v(M) = \mathcal{O}(V |\phi_\eta| \sqrt{M \ln M}).
\end{align*}

Although this bound is established against a static comparator, the sublinear growth rate implies the algorithm's ability to track the shifting optimum in non-stationary IoV environments.

\noindent\textbf{(3) Constraint violation.}
Let $\delta^m := \sum_{v} E_v^m(\eta_v^m) - \overline{E}_t^m$ denote the aggregate energy deviation. The dual update $\lambda^{m+1} = [\lambda^m + \omega \delta^m]_+$ follows projected subgradient ascent. Defining the Lyapunov function $L^m := \frac{1}{2} (\lambda^m)^2$, its drift is bounded by:
\begin{align*}
L^{m+1} - L^m \leq \omega \lambda^m \delta^m + \frac{1}{2} \omega^2 (\delta^m)^2.
\end{align*}

Summing over $m=1, \dots, M$ and using $\lambda^{M+1} \geq \omega \mathbb{V}(M)$ yields:
\begin{align*}
\frac{1}{2} \omega^2 \mathbb{V}(M)^2 \leq L^{M+1} \leq \omega \sum_{m=1}^M \lambda^m \delta + \frac{1}{2} \omega^2 M V^2 E_{\max}^2.
\end{align*}

Using the bound $\lambda^m \leq \frac{V E_{\max}}{\omega}$, we obtain:
\begin{align*}
\mathbb{E}[\mathbb{V}(M)] \leq \frac{\sqrt{2 \mathbb{E}[L^{M+1}]}}{\omega} = \mathcal{O}(\sqrt{M}),
\end{align*}
which completes the proof.
\end{proof}

\subsection{Mobility-Aware Fault-Tolerant Scheduling}

In vehicular federated fine-tuning, another challenge is that vehicles may disconnect from the RSU before completing training, wasting computation and losing updates. We then propose a mobility-aware fault-tolerant scheduling strategy that anticipates disconnections and selects a cost-minimizing fallback action. For vehicle $v$ on task $t$ at round $m$, the system evaluates three fallback strategies upon predicted departure:

\noindent\textbf{Strategy 0 (Early Upload):} If the local accuracy $q_v^{t,m}$ exceeds a threshold $q_v^{*,t}$, the vehicle uploads its LoRA parameters immediately. The residual loss is penalized by
\begin{align*}
\text{Cost}_0 = \gamma \cdot \max(0, q_v^{*,t} - q_v^{t,m}).
\end{align*}

\noindent\textbf{Strategy 1 (Task Migration):} If accuracy is insufficient and a nearby vehicle is available, the training task is migrated. The cost reflects latency $\tau_v^{\text{mig}}$ and energy $e_v^{\text{mig}}$:
\begin{align*}
\text{Cost}_1 = \alpha \cdot \tau_v^{\text{mig}} + \beta \cdot e_v^{\text{mig}}.
\end{align*}

\noindent\textbf{Strategy 2 (Abandonment):} If migration is infeasible and accuracy is unsatisfactory, training is aborted. The cost accounts for wasted energy $\hat{e}_v^{t,m}$ and missed contribution:
\begin{align*}
\text{Cost}_2 = \beta \cdot \hat{e}_v^{t,m} + \gamma \cdot q_v^{*,t}.
\end{align*}

Let $z_v^{t,m} \in \{0,1,2\}$ denote the chosen strategy. The scheduling objective minimizes total expected cost:
\begin{align*}
\min_{z_v^{t,m}} \sum_{v \in \mathcal{V}} \sum_{t \in \mathcal{T}} \sum_{m=1}^{M} \text{Cost}_{z_v^{t,m}} \cdot \mathbb{I}[z_v^{t,m}].
\end{align*}

At runtime, all three strategies are evaluated, and the one with the lowest cost is selected, ensuring resilient training under dynamic vehicle mobility.

\section{Performance Evaluation}

\subsection{Experimental Setup}

We establish a simulator tailored to hierarchical IoV federated fine-tuning. Each RSU is assigned a specific task and coordinates localized fine-tuning on vehicles with heterogeneous system capacities.
RSUs are placed at traffic hotpots with constrained spatial coverage, and vehicles follow actual trajectories, dynamically entering or exiting RSU zones. It incorporates intermittent connectivity, early departures, and task handoffs, reflecting real-world deployment challenges.
In our experiments, each fine-tuning task spans 400 global communication rounds, with active vehicles performing five local update steps per round using the Adam optimizer with learning rate $1 \times 10^{-5}$ and batch size 10. Vehicles receive unequal, randomly sampled portions of task-specific datasets with non-i.i.d. distributions. To emulate realistic mobility, T-Drive dataset~\cite{zheng2011t-drive} containing GPS traces is used. 
For the proposed UCB-DUAL algorithm, we set  $\alpha=0.5$ and $\gamma=2$ to balance latency and accuracy. The total energy budget \textbf{$E_{\text{total}}$} is a configurable, hardware-interpretable threshold derived from the physical power and battery discharge constraints. We set the dual learning rate $\omega = 0.05$ and the exploration factor $\epsilon = \sqrt{2}$. To ensure stable estimation of the reward-cost statistics, a warm-up period of $Q = 6$ rounds is employed.

\noindent\textbf{Models, Tasks, and Datasets.} 
To ensure robustness across diverse IoV scenarios, we employ ViT-Base~\cite{dosovitskiy2020image} and Swin-Base~\cite{liu2021swin} as backbones, with LoRA adapters integrated into attention and feed-forward linear layers following~\cite{hu2021lora}.
Our experiments encompass three representative perception tasks characterized by high-dimensional data and inherent complexity: 1) \textbf{Multi-scale Urban Object Detection (OD):} on the Road-Traffic dataset~\cite{road-traffic} for high-precision multi-object sensing in dense traffic; 2) \textbf{Scene-level Semantic Segmentation (SS)} on Cityscapes~\cite{Cordts2016Cityscapes} for fine-grained lane and road structure parsing; and 3) \textbf{Context-aware Traffic Sign and Road-symbol Classification (TC)} on TSRD~\cite{evandu2025traffic} for robust sign identification under non-iid vehicular data distributions.
This diverse multi-task configuration, characterized by heterogeneous data formats and task complexities, provides a comprehensive foundation for evaluating our federated resource scheduling and fine-tuning strategies.

\noindent\textbf{Baseline Methods.}
We compare the proposed approach with the following representative baselines:

\begin{itemize}
    \item \textbf{HomoLoRA~\cite{mcmahan2017communication}:} All vehicles adopt a fixed LoRA rank, and model updates are aggregated using FedAvg.
    \item \textbf{HetLoRA~\cite{cho2024heterogeneous}:} Vehicles adopt heterogeneous LoRA ranks based on local capabilities, utilizing zero-padding and gradient-based self-pruning to reduce redundancy.
    \item \textbf{FedRA~\cite{su2024fedra}:} A federated tuning approach that randomly allocates subsets of model layers to heterogeneous vehicles and aggregates adapter updates accordingly.
\end{itemize}

\noindent\textbf{Evaluation Metrics.}
We evaluate system performance using the following metrics:

\begin{itemize}
    \item \textbf{Reward.} A unified metric balancing accuracy and latency, defined by the weighted objective in Equation~(\ref{eq:multi_task_opt}). Higher values indicate better efficiency and performance.

    \item \textbf{Avg. Accuracy.} The mean of the best accuracy per task, reflecting generalization across heterogeneous tasks.

    \item \textbf{Latency.} Average latency per federated fine-tuning communication round.

    \item \textbf{Energy Consumption.} Average energy consumption per communication round, measuring resource efficiency under mobility and communication overhead.
\end{itemize}

\subsection{Evaluation Results}

\begin{table}[t]
\centering
\caption{Comparison with baselines varying models. Best results are \textbf{bolded}, and second best are \underline{underlined}. Comm. denotes the average parameter upload volume per round.}
\label{tab:combined_results}
\resizebox{\columnwidth}{!}{
\begin{tabular}{lcccccc}
\toprule
\textbf{Method} & \textbf{Model} & \textbf{Reward} $\uparrow$ & \textbf{Avg. Acc.}(\%) $\uparrow$ & \textbf{Lat.}(s) $\downarrow$ & \textbf{Energy($J$)} $\downarrow$ & \textbf{Comm.}(M) $\downarrow$ \\
\midrule
\multirow{2}{*}{HomoLoRA}
    & ViT  & $367.7_{\pm 0.1}$ & $82.5_{\pm 0.4}$ & $82.9_{\pm 0.2}$ & $3{,}851.1_{\pm 0.5}$ & 29.8 \\
    & Swin & $345.3_{\pm 0.2}$ & $80.9_{\pm 0.6}$ & $90.5_{\pm 0.5}$ & $4{,}198.7_{\pm 0.8}$ & 85.8 \\
\cmidrule{1-7}
\multirow{2}{*}{HetLoRA}
    & ViT  & $390.7_{\pm 0.1}$ & $\underline{83.9_{\pm 0.3}}$ & $71.2_{\pm 0.3}$ & $3{,}601.9_{\pm 0.4}$ & 45.1 \\
    & Swin & $355.8_{\pm 0.1}$ & $83.9_{\pm 0.4}$ & $96.1_{\pm 0.3}$ & $3{,}993.6_{\pm 0.6}$ & 129.9 \\
\cmidrule{1-7}
\multirow{2}{*}{FedRA}
    & ViT  & $\underline{410.1_{\pm 0.2}}$ & $83.8_{\pm 0.5}$ & $\underline{60.4_{\pm 0.2}}$ & $\underline{3{,}186.1_{\pm 0.6}}$ & \underline{27.8} \\
    & Swin & $\underline{373.4_{\pm 0.2}}$ & $\underline{84.4_{\pm 0.5}}$ & $\underline{82.8_{\pm 0.4}}$ & $\underline{3{,}931.9_{\pm 0.7}}$ & \underline{67.1} \\
\midrule
\multirow{2}{*}{\textbf{Ours}}
    & ViT  & $\mathbf{431.2_{\pm 0.1}}$ & $\mathbf{85.2_{\pm 0.3}}$ & $\mathbf{50.7_{\pm 0.2}}$ & $\mathbf{2{,}561.5_{\pm 0.4}}$ & $\mathbf{26.1}$ \\
    & Swin & $\mathbf{407.0_{\pm 0.1}}$ & $\mathbf{85.9_{\pm 0.2}}$ & $\mathbf{67.8_{\pm 0.2}}$ & $\mathbf{3{,}062.4_{\pm 0.5}}$ & $\mathbf{65.8}$ \\
\bottomrule
\end{tabular}
}
\end{table}

\begin{table}[t]
  \centering
  \caption{Comparison with baselines varying Perception Tasks. Best results are \textbf{bold}, and second best are \underline{underlined}.}
  \label{tab:task_reward_comparison}
  \resizebox{\columnwidth}{!}{
  \begin{tabular}{lcccc}
    \toprule
    \textbf{Method} & \textbf{Model} & \textbf{OD Reward} $\uparrow$ & \textbf{SS Reward} $\uparrow$ & \textbf{TC Reward} $\uparrow$ \\
    \midrule
    \multirow{2}{*}{HomoLoRA}
        & ViT  & $132.2_{\pm 0.2}$ & $107.1_{\pm 0.1}$ & $142.5_{\pm 0.2}$ \\
        & Swin & $109.2_{\pm 0.1}$ & $\underline{115.6_{\pm 0.1}}$ & $126.2_{\pm 0.1}$ \\
    \cmidrule{1-5}
    \multirow{2}{*}{HetLoRA}
        & ViT  & $129.5_{\pm 0.1}$ & $124.0_{\pm 0.1}$ & $147.7_{\pm 0.1}$ \\
        & Swin & $112.7_{\pm 0.1}$ & $106.2_{\pm 0.2}$ & $138.7_{\pm 0.1}$ \\
    \cmidrule{1-5}
    \multirow{2}{*}{FedRA}
        & ViT  & $\underline{132.6_{\pm 0.2}}$ & $\underline{127.9_{\pm 0.1}}$ & $\underline{156.4_{\pm 0.2}}$ \\
        & Swin & $\underline{117.6_{\pm 0.1}}$ & $113.4_{\pm 0.6}$ & $\underline{146.9_{\pm 0.1}}$ \\
    \cmidrule{1-5}
    \multirow{2}{*}{\textbf{Ours}}
        & ViT  & $\mathbf{136.5_{\pm 0.1}}$ & $\mathbf{140.2_{\pm 0.1}}$ & $\mathbf{157.3_{\pm 0.1}}$ \\
        & Swin & $\mathbf{118.1_{\pm 0.1}}$ & $\mathbf{148.2_{\pm 0.4}}$ & $\mathbf{148.1_{\pm 0.1}}$ \\
    \bottomrule
  \end{tabular}
  }
\end{table}

\begin{figure}[t]
  \centering
  \begin{minipage}{0.49\linewidth}
    \centering
    \includegraphics[width=\linewidth]{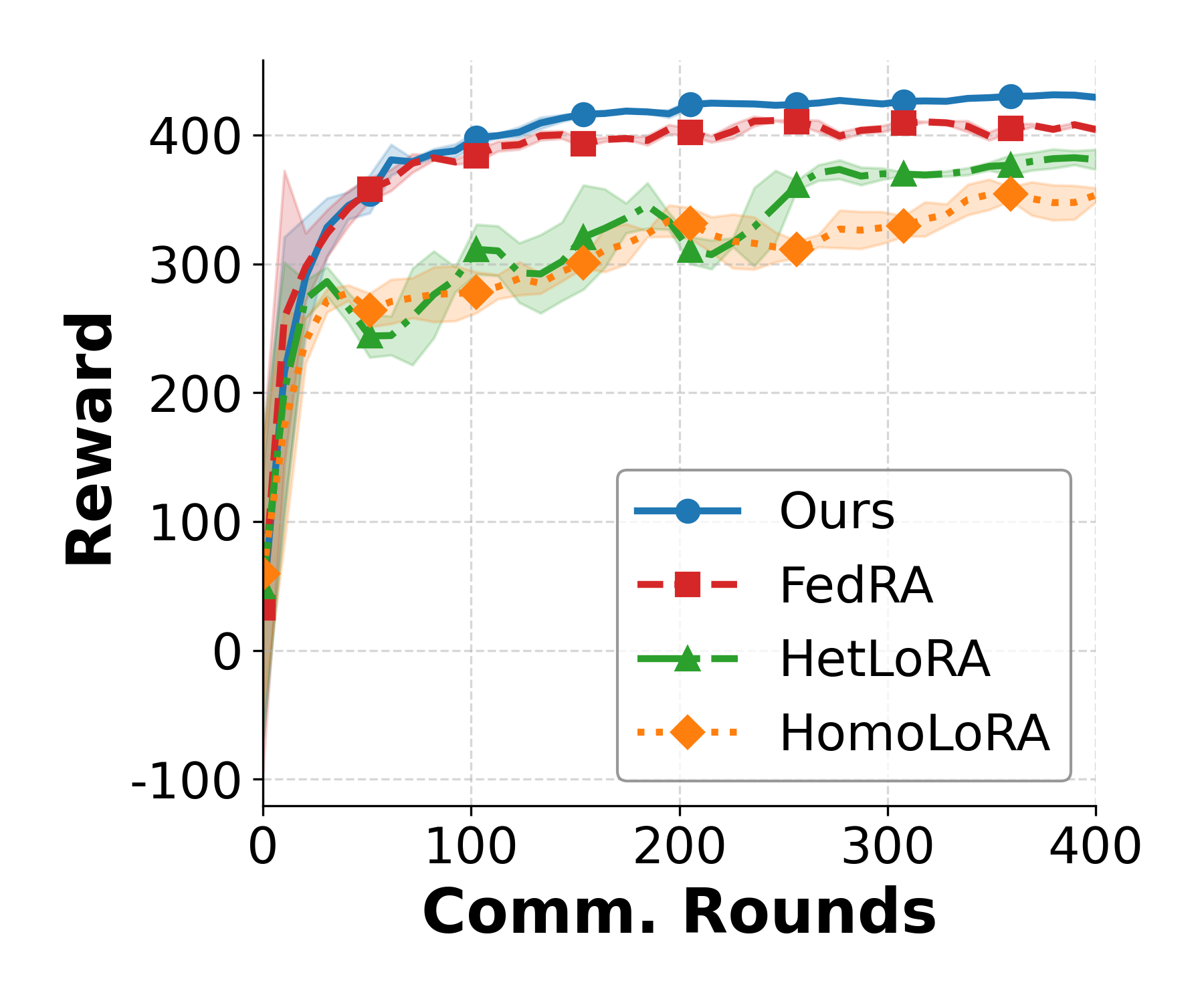}
    \caption{Reward over communication rounds.}
    \label{fig:reward}
  \end{minipage}
  \hfill
  \begin{minipage}{0.49\linewidth}
    \centering
    \includegraphics[width=\linewidth]{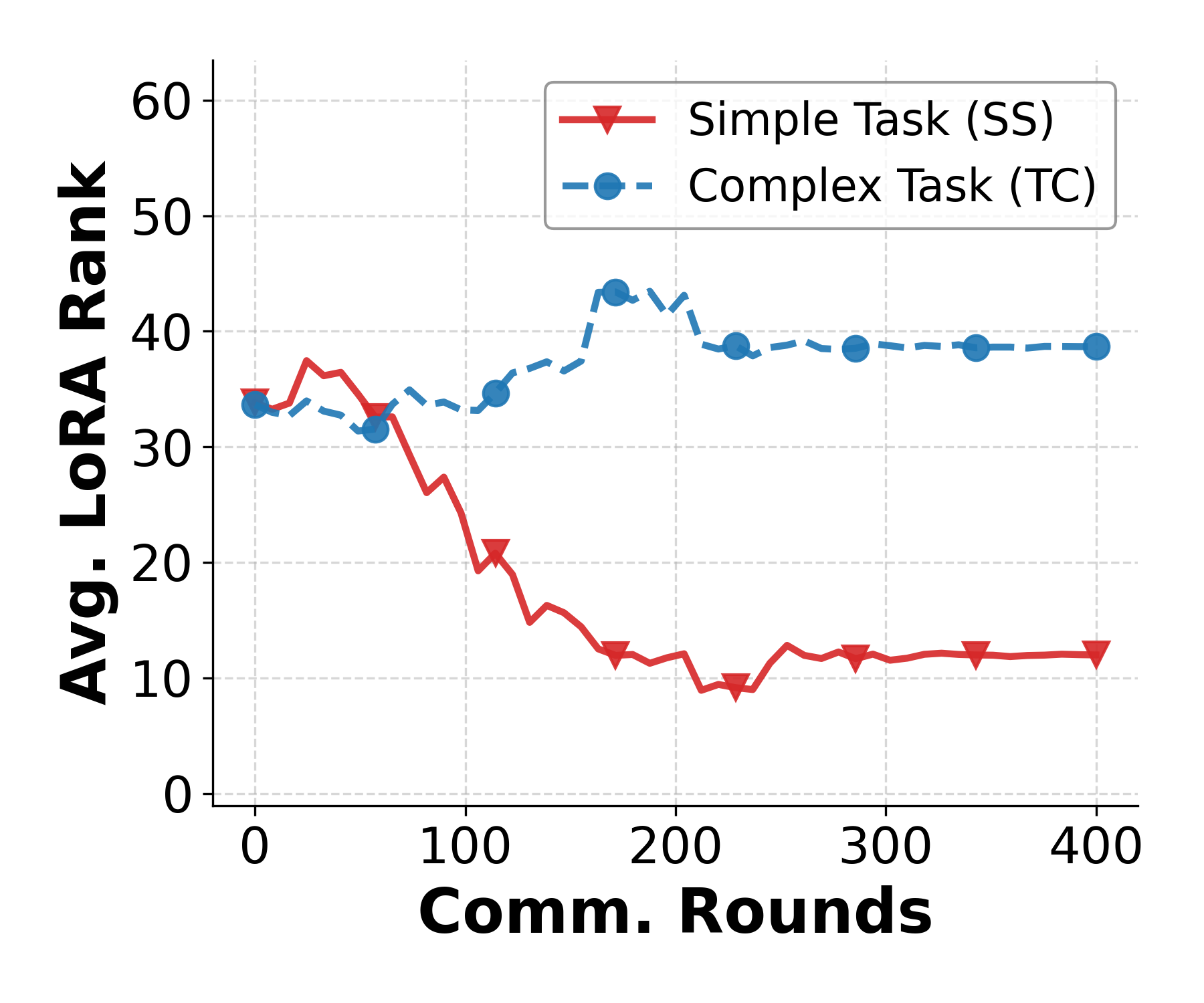}
    \caption{LoRA rank evolution across tasks.}
    \label{fig:rank_evo}
  \end{minipage}
\end{figure}

\begin{figure}
    \centering
    \includegraphics[width=1\linewidth]{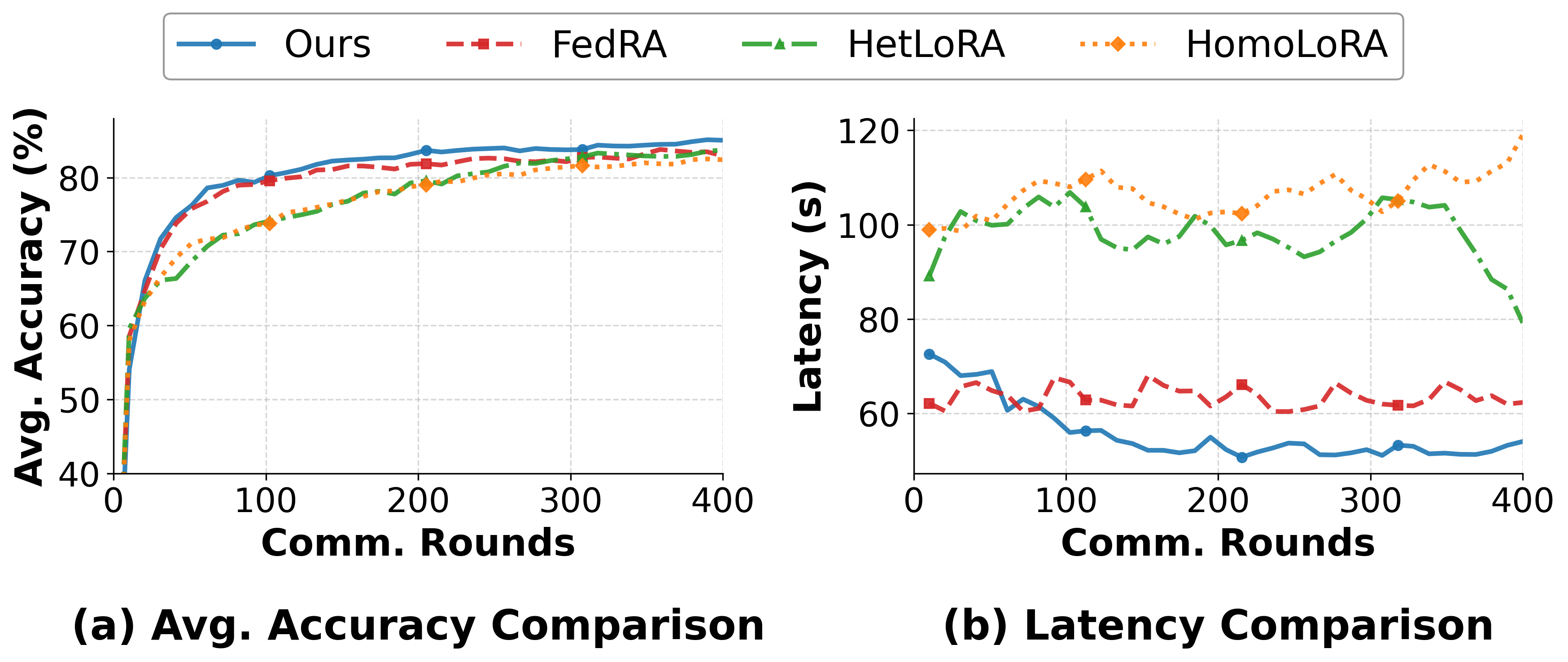}
    \caption{Accuracy and latency comparison across methods.}
    \label{fig:acc-latency}
\end{figure}


As shown in Table~\ref{tab:combined_results}, our method consistently achieves the best overall trade-off across all metrics. Compared to HomoLoRA, a fixed-rank scheme, our approach significantly improves accuracy while reducing latency and energy consumption. While HetLoRA employs heterogeneous ranks, its reliance on padding and pruning introduces additional overhead, leading to suboptimal efficiency. FedRA achieves competitive performance through randomized allocation but lacks energy-aware coordination for dynamic workloads. In contrast, our framework jointly optimizes task-level energy and vehicle-level ranks, yielding superior reward and system efficiency. Importantly, the RSU-side SVD computation accounts for only \textbf{2\%} of the total round latency, being significantly outweighed by the local fine-tuning and transmission phases, which confirms its minimal impact on overall system performance.

Figure~\ref{fig:reward} shows reward evolution using the ViT backbone. Our method outperforms all baselines in convergence speed and steady-state reward, confirming its superior exploration and resource efficiency under time-varying conditions. Figure~\ref{fig:acc-latency} further contrasts accuracy and latency dynamics. While HomoLoRA suffers from limited accuracy due to inflexible rank assignments and HetLoRA faces latency instability from heterogeneous aggregation, our approach maintains stable accuracy growth with consistently low latency. Finally, the performance gap between our method and FedRA highlights that random layer allocation, while reducing overhead, fails to capture the optimal adaptation required for complex tasks.

Table~\ref{tab:task_reward_comparison} summarizes peak rewards across perception tasks. Our method consistently outperforms all baselines by jointly optimizing inter-task energy budgets and intra-task rank adaptation, enabling dynamic resource allocation based on task complexity and runtime conditions. Unlike baselines that rely on independent, static, or randomized strategies, our approach effectively exploits cross-task coordination under global energy constraints.

\begin{figure}[t]
  \centering
  \begin{minipage}{0.49\linewidth}
    \centering
    \includegraphics[width=\linewidth]{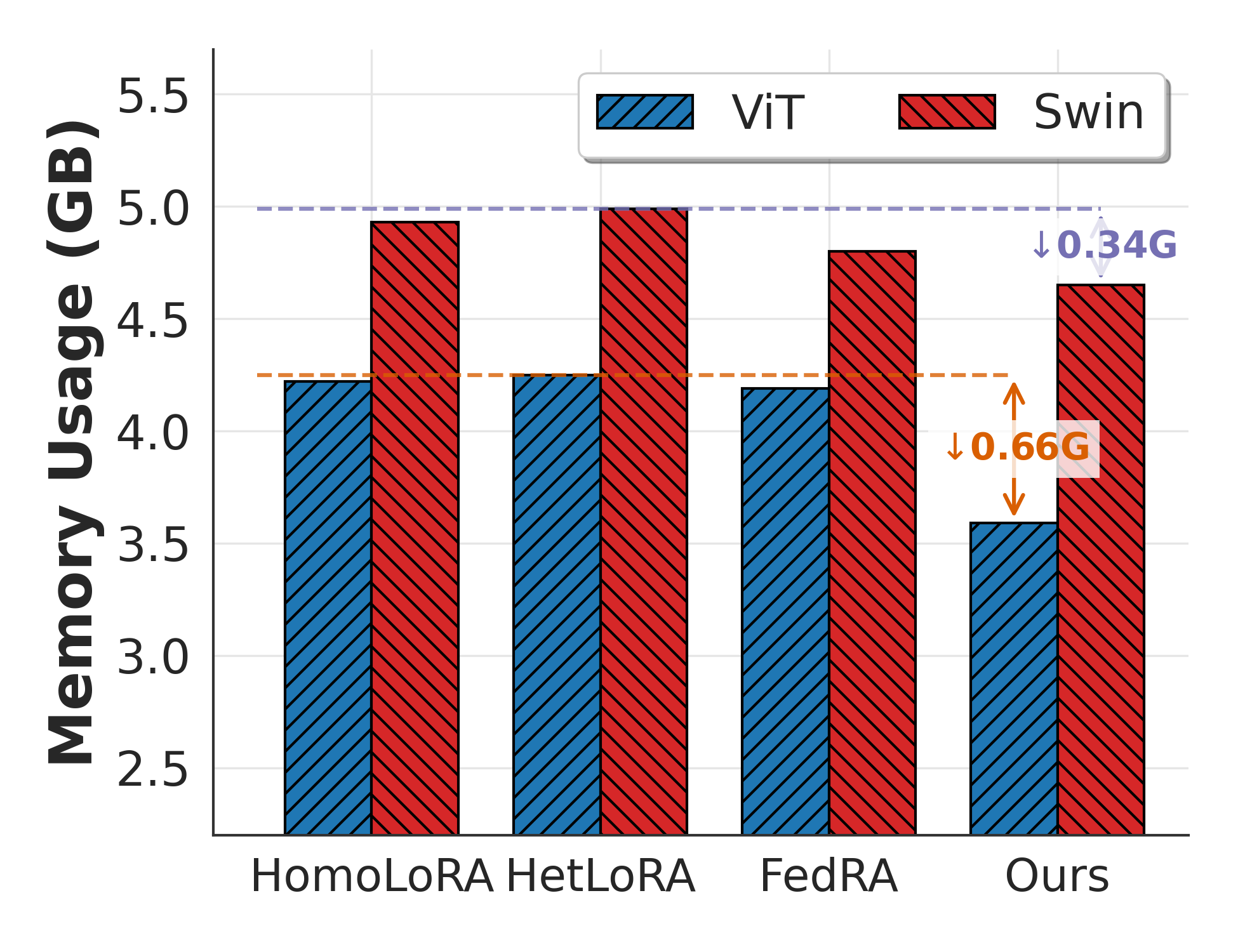}
    \caption{Comparison of CUDA memory usage.}
    \label{fig:mem_comp}
  \end{minipage}
  \hfill
  \begin{minipage}{0.49\linewidth}
    \centering
    \includegraphics[width=\linewidth]{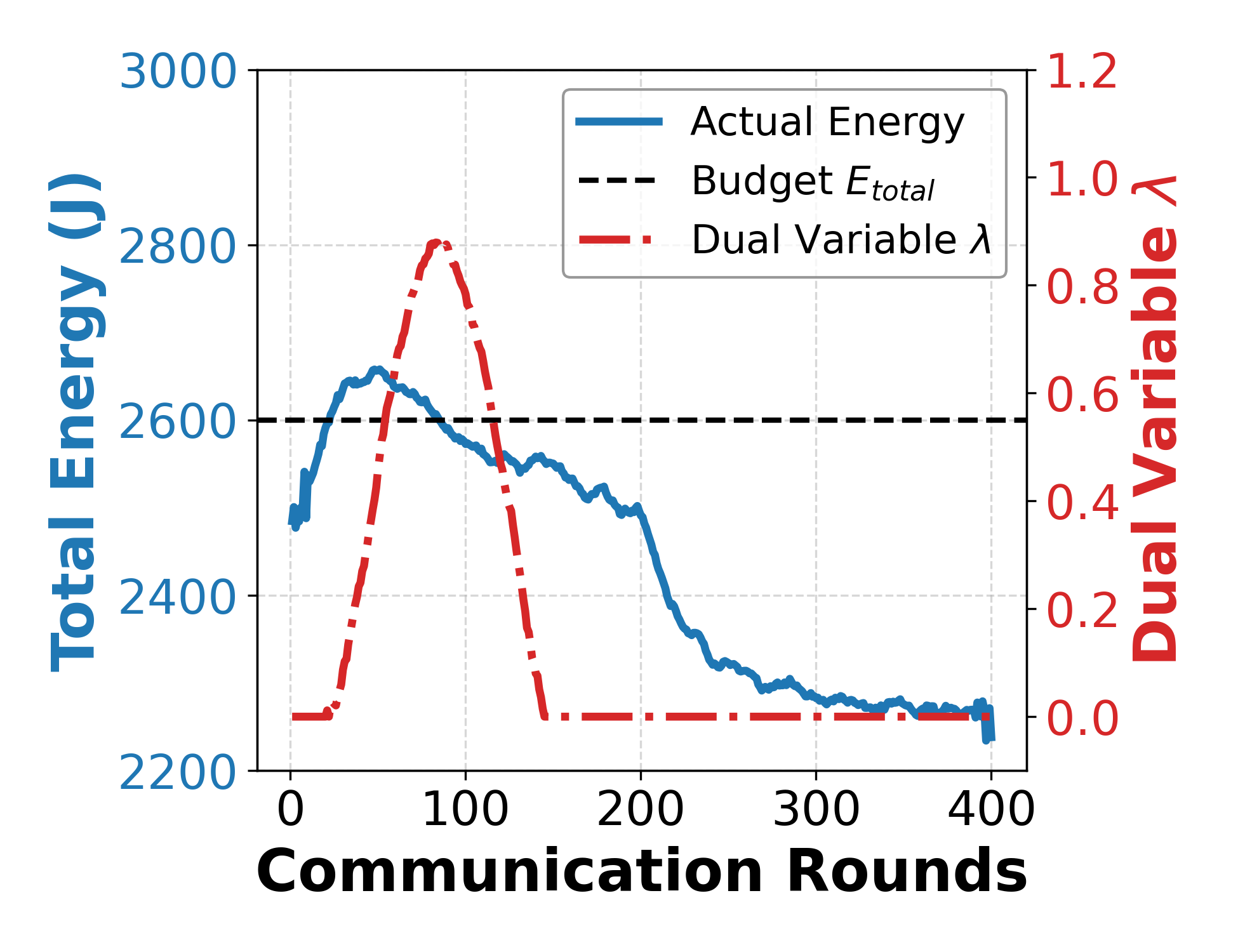}
    \caption{Energy and dual variable evolution.}
    \label{fig:energy_dual}
  \end{minipage}
\end{figure}

Fig.~\ref{fig:mem_comp} evaluates CUDA memory consumption on the NVIDIA RTX 4090. Our method consistently achieves the lowest footprint across both backbones, demonstrating superior resource efficiency. In contrast, HomoLoRA’s fixed-rank approach incurs unnecessary overhead by ignoring device heterogeneity, while HetLoRA’s reliance on zero-padding for adapter alignment introduces structural redundancy. Although FedRA offers partial memory reduction via randomized layer assignment, its stochastic nature lacks energy-aware adaptation. Conversely, our energy-aware SVD rank construction enables highly fine-grained parameter reduction while simultaneously preserving adaptation quality, yielding the most substantial and stable memory savings.

\subsection{Ablation Study}

\begin{table}[t]
\centering
\caption{Ablation results on energy-aware scheduler and mobility-aware strategy.}
\label{tab:ablation}
\resizebox{\columnwidth}{!}{
\begin{tabular}{lccccc}
\toprule
\textbf{Variant} & \textbf{Model} & \textbf{Reward} $\uparrow$ & \textbf{Avg. Acc.}(\%) $\uparrow$ & \textbf{Lat.}(s) $\downarrow$ & \textbf{Energy($J$)} $\downarrow$ \\
\midrule
\multirow{2}{*}{\textbf{Ours (Full)}}
    & ViT  & $\mathbf{431.2_{\pm 0.1}}$ & $\mathbf{85.2_{\pm 0.3}}$ & $\mathbf{50.7_{\pm 0.2}}$ & $\mathbf{2{,}561.5_{\pm 0.4}}$ \\
    & Swin & $\mathbf{407.0_{\pm 0.1}}$ & $\mathbf{85.9_{\pm 0.2}}$ & $\mathbf{67.8_{\pm 0.2}}$ & $\mathbf{3{,}062.4_{\pm 0.5}}$ \\
\cmidrule{1-6}
\multirow{2}{*}{\shortstack[l]{w/o Mobility-aware\\Scheduling}} 
    & ViT & $412.8_{\pm 0.1}$ & $84.2_{\pm 0.1}$ & $62.5{\pm 0.4}$ & $3,150.2{\pm 0.5}$ \\
 & Swin & $382.4_{\pm 0.2}$ & $84.8_{\pm 0.3}$ & $81.2{\pm 0.2}$ & $3,680.5{\pm 0.2}$ \\
\cmidrule{1-6}
\multirow{2}{*}{\shortstack[l]{w/o Energy-aware\\Scheduler}}
    & ViT  & $402.5_{\pm 0.1}$ & $83.9_{\pm 0.4}$ & $65.8_{\pm 0.3}$ & $3{,}422.4_{\pm 0.5}$\\
    & Swin & $370.4_{\pm 0.1}$ & $84.3_{\pm 0.3}$ & $88.5_{\pm 0.4}$ & $3{,}965.2_{\pm 0.7}$ \\
\bottomrule
\end{tabular}
}
\end{table}

Table~\ref{tab:ablation} evaluates the contributions of the energy-aware rank scheduler and mobility-aware strategy across ViT and Swin backbones. Removing the energy-aware scheduler consistently degrades performance, manifesting in lower cumulative rewards and accuracy alongside increased latency and communication overhead. Our full framework overcomes these bottlenecks by jointly coordinating task priorities and rank allocations. Fig.~\ref{fig:rank_evo} corroborates this: the scheduler dynamically lowers ranks for simple tasks (e.g., SS) to prevent over-provisioning and increases them for complex tasks (e.g., TC) to preserve model capacity.
Disabling mobility-aware scheduling further impairs efficiency by failing to anticipate vehicle departures, resulting in wasted computation from discarded local updates. Although the mobility-aware strategy itself is lightweight, this strategy enhances system robustness by preserving intermediate results and minimizing task completion delays. Together, these results confirm that the synergy between fine-grained rank adaptation and mobility-aware scheduling is essential for achieving robust system optimization in dynamic vehicular networks.

\subsection{Constraint Enforcement and Dual Variable Dynamics}

Fig.~\ref{fig:energy_dual} illustrates the constraint-enforcement of UCB-DUAL by tracking total energy and the dual variable $\lambda$. Initially, $\lambda$ rises as energy consumption nears or exceeds the budget, penalizing high-rank configurations to restore system feasibility. As performance converges, energy stabilizes below the limit, causing  $\lambda$ to decay. Notably, after approximately 150 rounds, the system autonomously shifts to lower-rank configurations without sacrificing accuracy, ensuring the hard energy constraint is satisfied while maximizing efficiency. This confirms that UCB-DUAL effectively enforces global budgets while adaptively balancing the inherent performance–energy trade-off within dynamic IoV networks.

\subsection{Scalability Analysis}
\begin{figure}[t]
  \centering
  \begin{minipage}{0.49\linewidth}
    \centering
    \includegraphics[width=\linewidth]{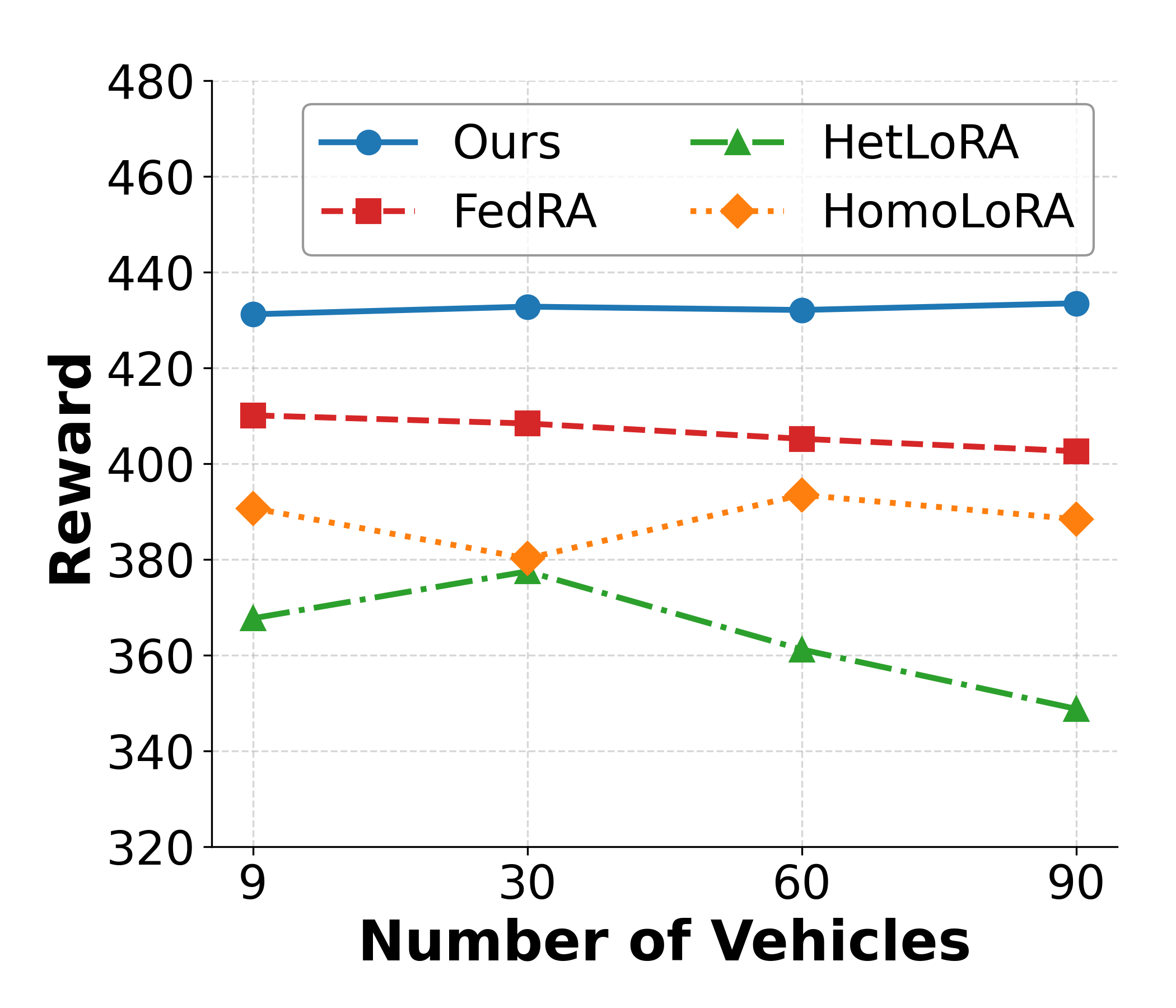}
    \caption{Scalability under increasing number of vehicles.}
    \label{fig:agent_scalability}
  \end{minipage}
  \hfill
  \begin{minipage}{0.49\linewidth}
    \centering
    \includegraphics[width=\linewidth]{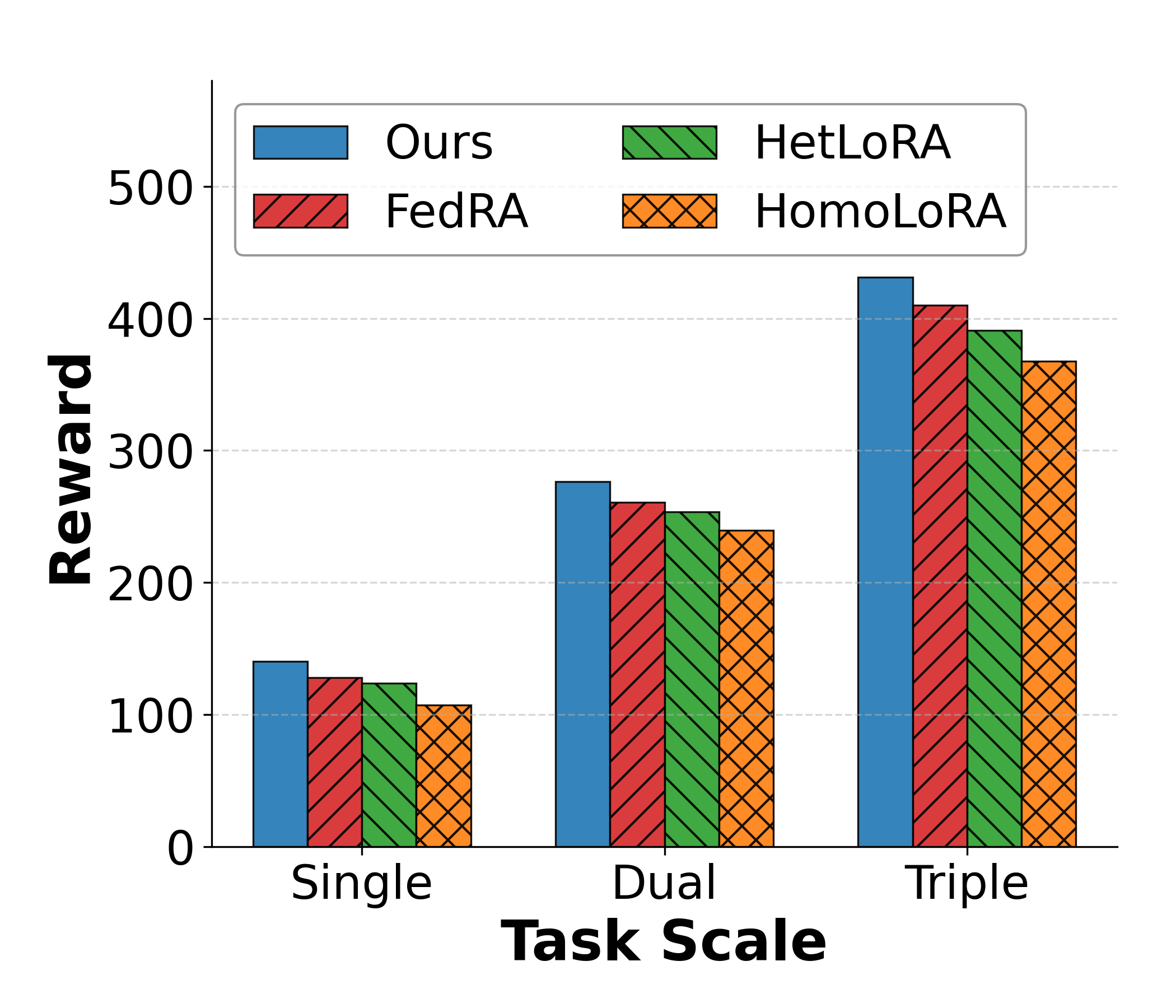}
    \caption{Performance under varying task scales.}
    \label{fig:task_scalability}
  \end{minipage}
\end{figure}
Using ViT as the backbone, we first evaluate scalability with respect to the number of participating vehicles by varying the fleet size from 9 to 90 while keeping the environment unchanged. As shown in Fig.~\ref{fig:agent_scalability}, our method maintains consistently high and stable cumulative reward as vehicle density increases, demonstrating strong robustness under dense participation. In contrast, FedRA shows a gradual performance degradation as contention intensifies, reflecting the limitations of random layer allocation. HetLoRA exhibits noticeable fluctuations under heterogeneous participation, while HomoLoRA degrades steadily due to its rigid, uniform rank configuration. These results indicate that adaptive scheduling with rank allocation enables effective coordination at scale.

We further examine task scalability by increasing the number of concurrent tasks from one to three, as shown in Fig.~\ref{fig:task_scalability}. Our method consistently delivers the highest cumulative reward, demonstrating superior robustness as the task load grows. While FedRA outperforms HomoLoRA and HetLoRA by supporting heterogeneous participation, its performance gains saturate due to the lack of task-aware adaptation. In contrast, fixed-rank and padding-based strategies suffer from increasing inefficiency as task diversity grows. By explicitly modeling task difficulty and jointly optimizing task-level energy budgets with vehicle-level rank adaptation, our approach sustains stable reward growth and superior scalability.

\section{CONCLUSION}
\label{sec:conclusion}

This paper presents an energy-aware federated fine-tuning framework for dynamic, resource-constrained IoV environments. By combining LoRA-based multi-task adaptation with adaptive rank scheduling, our approach jointly addresses vehicle heterogeneity, task diversity, mobility, and energy constraints. We formulate rank selection as a constrained online learning problem and propose a lightweight UCB-DUAL mechanism for efficient rank adaptation with minimal coordination overhead. Extensive evaluations on a large-scale, trajectory-driven simulator show consistent improvements over strong baselines in reward, accuracy, latency, energy, communication, and memory efficiency, while ablation and scalability studies confirm the importance of adaptive scheduling as task and vehicle scales grow.

\ifCLASSOPTIONcaptionsoff
  \newpage
\fi

\bibliographystyle{IEEEtran}
\bibliography{main}

@inproceedings{javaid2018smart,
  title={Smart traffic management system using Internet of Things},
  author={Javaid, Sabeen and Sufian, Ali and Pervaiz, Saima and Tanveer, Mehak},
  booktitle={2018 20th international conference on advanced communication technology (ICACT)},
  pages={393--398},
  year={2018},
  organization={IEEE}
}

@article{kang2024iov_edge,
  title={Edge Computing in the Internet of Vehicles: Architectures, Applications, and Challenges},
  author={Kang, Jiawen and Xiong, Zehui and Niyato, Dusit and Yu, H. Vincent Poor},
  journal={Information Sciences},
  volume={670},
  pages={119--145},
  year={2024}
}

@article{meneguette2022vec,
  title={Vehicular Edge Computing: Architecture, Resource Management, and Security: A Survey},
  author={Meneguette, Rodolfo I. and Boukerche, Azzedine and Loureiro, Antonio A. F.},
  journal={ACM Computing Surveys},
  volume={54},
  number={5},
  pages={1--36},
  year={2022}
}

@article{sadeghian2022ev_charging,
  title={A Comprehensive Review on Electric Vehicles Smart Charging: Solutions, Strategies, Technologies, and Challenges},
  author={Sadeghian, Omid and others},
  journal={Energy Reports},
  volume={8},
  pages={9154--9183},
  year={2022}
}

@article{bian2025survey,
  title={A survey on parameter-efficient fine-tuning for foundation models in federated learning},
  author={Bian, Jieming and Peng, Yuanzhe and Wang, Lei and Huang, Yin and Xu, Jie},
  journal={arXiv preprint arXiv:2504.21099},
  year={2025}
}

@inproceedings{su2024fedra,
  title={Fedra: A random allocation strategy for federated tuning to unleash the power of heterogeneous clients},
  author={Su, Shangchao and Li, Bin and Xue, Xiangyang},
  booktitle={European Conference on Computer Vision},
  pages={342--358},
  year={2024},
  organization={Springer}
}

@inproceedings{Cordts2016Cityscapes,
title={The Cityscapes Dataset for Semantic Urban Scene Understanding},
author={Cordts, Marius and Omran, Mohamed and Ramos, Sebastian and Rehfeld, Timo and Enzweiler, Markus and Benenson, Rodrigo and Franke, Uwe and Roth, Stefan and Schiele, Bernt},
booktitle={Proc. of the IEEE Conference on Computer Vision and Pattern Recognition (CVPR)},
year={2016}
}

@misc{evandu2025traffic,
  author = {Evan Du},
  title = {Traffic Sign Recognition Database},
  year = {2025},
  url = {https://huggingface.co/datasets/EvanDu/Traffic_Sign_Recogntion_Database},
  note = {Accessed: 2025-10-09},
}

@inproceedings{liu2021swin,
  title={Swin transformer: Hierarchical vision transformer using shifted windows},
  author={Liu, Ze and Lin, Yutong and Cao, Yue and Hu, Han and Wei, Yixuan and Zhang, Zheng and Lin, Stephen and Guo, Baining},
  booktitle={Proceedings of the IEEE/CVF international conference on computer vision},
  pages={10012--10022},
  year={2021}
}

@misc{ road-traffic,
    title = { road traffic Dataset },
    type = { Open Source Dataset },
    author = { Roboflow 100 },
    howpublished = { \url{ https://universe.roboflow.com/object-detection/road-traffic } },
    url = { https://universe.roboflow.com/object-detection/road-traffic },
    journal = { Roboflow Universe },
    publisher = { Roboflow },
    year = { 2022 },
    month = { nov },
    note = { visited on 2023-03-29 },
}

@book{Tse_Viswanath_2005, place={Cambridge}, title={Fundamentals of Wireless Communication}, publisher={Cambridge University Press}, author={Tse, David and Viswanath, Pramod}, year={2005}}

@article{badanidiyuru2018bandits,
  title={Bandits with knapsacks},
  author={Badanidiyuru, Ashwinkumar and Kleinberg, Robert and Slivkins, Aleksandrs},
  journal={Journal of the ACM (JACM)},
  volume={65},
  number={3},
  pages={1--55},
  year={2018},
  publisher={ACM New York, NY, USA}
}

@article{wang2021green,
  title={Green Internet of Vehicles (IoV) in the 6G era: Toward sustainable vehicular communications and networking},
  author={Wang, Junhua and Zhu, Kun and Hossain, Ekram},
  journal={IEEE Transactions on Green Communications and Networking},
  volume={6},
  number={1},
  pages={391--423},
  year={2021},
  publisher={IEEE}
}

@article{zheng2025online,
  title={Online location planning for ai-defined vehicles: optimizing joint tasks of order serving and spatio-temporal heterogeneous model fine-tuning},
  author={Zheng, Bokeng and Rao, Bo and Zhu, Tianxiang and Tan, Chee Wei and Duan, Jingpu and Zhou, Zhi and Chen, Xu and Zhang, Xiaoxi},
  journal={arXiv preprint arXiv:2502.04399},
  year={2025}
}

@article{qiu2020upper,
  title={Upper confidence primal-dual reinforcement learning for cmdp with adversarial loss},
  author={Qiu, Shuang and Wei, Xiaohan and Yang, Zhuoran and Ye, Jieping and Wang, Zhaoran},
  journal={Advances in Neural Information Processing Systems},
  volume={33},
  pages={15277--15287},
  year={2020}
}

@inproceedings{zhang2024rosevin,
  title={Rosevin: Employing resource-and rate-adaptive edge super-resolution for video streaming},
  author={Zhang, Xiaoxi and Xu, Haoran and Zou, Longhao and Duan, Jingpu and Wu, Chuan and Xue, Yali and Chen, Zuozhou and Chen, Xu},
  booktitle={IEEE INFOCOM 2024-IEEE Conference on Computer Communications},
  pages={491--500},
  year={2024},
  organization={IEEE}
}

@article{wang2022asynchronous,
  title={Asynchronous federated learning for edge-assisted vehicular networks},
  author={Wang, Siyuan and Wu, Qiong and Fan, Qiang and Fan, Pingyi and Wang, Jiangzhou},
  journal={arXiv preprint arXiv:2208.01901},
  year={2022}
}

@INPROCEEDINGS{Tomtit2024,
  author={Qi, Tianyu and Zhan, Yufeng and Li, Peng and Xia, Yuanqing},
  booktitle={IEEE INFOCOM 2024 - IEEE Conference on Computer Communications}, 
  title={Tomtit: Hierarchical Federated Fine-Tuning of Giant Models based on Autonomous Synchronization}, 
  year={2024},
  volume={},
  number={},
  pages={1910-1919},
  keywords={Adaptation models;Computational modeling;Scalability;Prototypes;Energy efficiency;Synchronization;Servers},
  doi={10.1109/INFOCOM52122.2024.10621369}}

@inproceedings{houlsby2019parameter,
  title={Parameter-efficient transfer learning for NLP},
  author={Houlsby, Neil and Giurgiu, Andrei and Jastrzebski, Stanislaw and Morrone, Bruna and De Laroussilhe, Quentin and Gesmundo, Andrea and Attariyan, Mona and Gelly, Sylvain},
  booktitle={International conference on machine learning},
  pages={2790--2799},
  year={2019},
  organization={PMLR}
}

@article{xie2024gai,
  title={GAI-IoV: Bridging Generative AI and Vehicular Networks for Ubiquitous Edge Intelligence},
  author={Xie, Gaochang and Xiong, Zehui and Zhang, Xinyuan and Xie, Renchao and Guo, Song and Guizani, Mohsen and Poor, H. Vincent},
  journal={IEEE Transactions on Wireless Communications},
  volume={23},
  number={4},
  pages={1234--1248},
  year={2024},
  publisher={IEEE}
}

@article{li2020federated,
  title={Federated Optimization in Heterogeneous Networks},
  author={Li, Tian and Sahu, Anit Kumar and Zaheer, Manzil and Sanjabi, Maziar and Talwalkar, Ameet and Smith, Virginia},
  journal={Proceedings of Machine Learning and Systems},
  volume={2},
  pages={429--450},
  year={2020}
}

@article{arivazhagan2019federated,
  author       = {Manoj Ghuhan Arivazhagan and
                  Vinay Aggarwal and
                  Aaditya Kumar Singh and
                  Sunav Choudhary},
  title        = {Federated Learning with Personalization Layers},
  journal      = {CoRR},
  volume       = {abs/1912.00818},
  year         = {2019},
  url          = {http://arxiv.org/abs/1912.00818},
  eprinttype    = {arXiv},
  eprint       = {1912.00818},
  timestamp    = {Thu, 02 Jan 2020 18:08:18 +0100},
  biburl       = {https://dblp.org/rec/journals/corr/abs-1912-00818.bib},
  bibsource    = {dblp computer science bibliography, https://dblp.org}
}

@inproceedings{mcmahan2017communication,
  title={Communication-efficient learning of deep networks from decentralized data},
  author={McMahan, Brendan and Moore, Eider and Ramage, Daniel and Hampson, Seth and y Arcas, Blaise Aguera},
  booktitle={Artificial intelligence and statistics},
  pages={1273--1282},
  year={2017},
  organization={PMLR}
}

@inproceedings{xie2024giov,
  title={GIoV: Achieving generative AI services in internet of vehicles via collaborative edge intelligence},
  author={Xie, Gaochang and Xie, Renchao and Zhang, Xinyuan and Nie, Jiangtian and Tang, Qinqin and Lim, Wei Yang Bryan and Niyato, Dusit},
  booktitle={2024 IEEE Wireless Communications and Networking Conference (WCNC)},
  pages={1--6},
  year={2024},
  organization={IEEE}
}

@inproceedings{cho2024heterogeneous,
  title={Heterogeneous LoRA for Federated Fine-tuning of On-Device Foundation Models},
  author={Cho, Yae Jee and Liu, Luyang and Xu, Zheng and Fahrezi, Aldi and Joshi, Gauri},
  booktitle={Proceedings of the 2024 Conference on Empirical Methods in Natural Language Processing},
  pages={12903--12913},
  year={2024}
}

@inproceedings{otoum2022transfer,
  title={Transfer learning-driven intrusion detection for Internet of Vehicles (IoV)},
  author={Otoum, Yazan and Wan, Yue and Nayak, Amiya},
  booktitle={2022 International Wireless Communications and Mobile Computing (IWCMC)},
  pages={342--347},
  year={2022},
  organization={IEEE}
}

@article {huang2024combining,
	Title = {Combining LoRA to GPT-Neo to Reduce Large Language Model Hallucination},
	Author = {Huang, Shi-han and Chen, Chia-yu},
	DOI = {10.21203/rs.3.rs-4515250/v1},
	Journal = {Research Square},
	Year = {2024},
	URL = {https://doi.org/10.21203/rs.3.rs-4515250/v1},
}

@inproceedings{zaken2021bitfit,
  title={BitFit: Simple Parameter-efficient Fine-tuning for Transformer-based Masked Language-models},
  author={Zaken, Elad Ben and Goldberg, Yoav and Ravfogel, Shauli},
  booktitle={Proceedings of the 60th Annual Meeting of the Association for Computational Linguistics (Volume 2: Short Papers)},
  pages={1--9},
  year={2022}
}

@inproceedings{li2021prefix,
  title={Prefix-Tuning: Optimizing Continuous Prompts for Generation},
  author={Li, Xiang Lisa and Liang, Percy},
  booktitle={Proceedings of the 59th Annual Meeting of the Association for Computational Linguistics and the 11th International Joint Conference on Natural Language Processing (Volume 1: Long Papers)},
  year={2021},
  organization={Association for Computational Linguistics}
}

@inproceedings{rath2018smart,
  title={Smart traffic management system for traffic control using automated mechanical and electronic devices},
  author={Rath, Mamata},
  booktitle={IOP Conference Series: Materials Science and Engineering},
  volume={377},
  number={1},
  pages={012201},
  year={2018},
  organization={IOP Publishing}
}

@article{ding2021overview,
  title={Overview of intelligent vehicle infrastructure cooperative simulation technology for IoV and automatic driving},
  author={Ding, Zirui and Xiang, Junping},
  journal={World Electric Vehicle Journal},
  volume={12},
  number={4},
  pages={222},
  year={2021},
  publisher={MDPI}
}

@inproceedings{dosovitskiy2020image,
  title={An Image is Worth 16x16 Words: Transformers for Image Recognition at Scale},
  author={Dosovitskiy, Alexey and Beyer, Lucas and Kolesnikov, Alexander and Weissenborn, Dirk and Zhai, Xiaohua and Unterthiner, Thomas and Dehghani, Mostafa and Minderer, Matthias and Heigold, G and Gelly, S and others},
  booktitle={International Conference on Learning Representations},
  year={2020}
}

@article{han2024parameter,
  title={Parameter-efficient fine-tuning for large models: A comprehensive survey},
  author={Han, Zeyu and Gao, Chao and Liu, Jinyang and Zhang, Jeff and Zhang, Sai Qian},
  journal={Transactions on Machine Learning Research},
  year={2024}
}

@article{xu2021blockchain,
  title={A blockchain-based roadside unit-assisted authentication and key agreement protocol for internet of vehicles},
  author={Xu, Zisang and Liang, Wei and Li, Kuan-Ching and Xu, Jianbo and Jin, Hai},
  journal={Journal of Parallel and Distributed Computing},
  volume={149},
  pages={29--39},
  year={2021},
  publisher={Elsevier}
}

@article{hu2021lora,
  title={Lora: Low-rank adaptation of large language models.},
  author={Hu, Edward J and Shen, Yelong and Wallis, Phillip and Allen-Zhu, Zeyuan and Li, Yuanzhi and Wang, Shean and Wang, Lu and Chen, Weizhu and others},
  journal={ICLR},
  volume={1},
  number={2},
  pages={3},
  year={2022}
}

@article{Li_Sahu_Talwalkar_Smith_2020,  
 title={Federated Learning: Challenges, Methods, and Future Directions}, 
 url={http://dx.doi.org/10.1109/msp.2020.2975749}, 
 DOI={10.1109/msp.2020.2975749}, 
 journal={IEEE Signal Processing Magazine}, 
 author={Li, Tian and Sahu, Anit Kumar and Talwalkar, Ameet and Smith, Virginia}, 
 year={2020}, 
 month={May}, 
 pages={50–60}, 
 language={en-US} 
 }

@misc{zheng2011t-drive,
author = {Zheng, Yu},
title = {T-Drive trajectory data sample},
year = {2011},
month = {August},
abstract = {This is a sample of T-Drive trajectory dataset that contains a one-week trajectories of 10,357 taxis. The total number of points in this dataset is about 15 million and the total distance of the trajectories reaches 9 million kilometers.

Please cite the following papers when using the dataset:
[1] Jing Yuan, Yu Zheng, Xing Xie, and Guangzhong Sun. Driving with knowledge from the physical world. In The 17th ACM SIGKDD international conference on Knowledge Discovery and Data mining, KDD'11, New York, NY, USA, 2011. ACM.
[2] Jing Yuan, Yu Zheng, Chengyang Zhang, Wenlei Xie, Xing Xie, Guangzhong Sun, and Yan Huang. T-drive: driving directions based on taxi trajectories. In Proceedings of the 18th SIGSPATIAL International Conference on Advances in Geographic Information Systems, GIS '10, pages 99-108, New York, NY, USA,2010. ACM.


Download the Trajectory Data},
url = {https://www.microsoft.com/en-us/research/publication/t-drive-trajectory-data-sample/},
edition = {T-Drive sample dataset},
note = {T-Drive sample dataset},
}
\newpage

\end{document}